\documentclass{article}

\PassOptionsToPackage{numbers,sort}{natbib}
\usepackage[preprint]{neurips_2026}

\usepackage[utf8]{inputenc} %
\usepackage[T1]{fontenc}    %
\usepackage{hyperref}       %
\usepackage{url}            %
\usepackage{booktabs}       %
\usepackage{amsfonts}       %
\usepackage{nicefrac}       %
\usepackage{microtype}      %
\usepackage{xcolor}         %

\usepackage{adjustbox}      %
\usepackage{graphicx,color}
\usepackage{siunitx}        %
\usepackage{wrapfig}        %
\usepackage{xifthen}
\usepackage{algorithm}
\usepackage{listings}
\usepackage{tablefootnote} %

\usepackage{titlesec}
\titlespacing*{\paragraph}{0pt}{0.0ex plus 0.2ex minus 0.1ex}{1ex plus 0.05ex} %

\usepackage[noend]{algpseudocode}
\algrenewcommand\algorithmicindent{1em}
\algrenewcommand{\algorithmiccomment}[1]{%
\bgroup\hskip2em\textcolor{ourdarkgreen}{//~\textsl{#1}}\egroup}

\usepackage[textsize=tiny]{todonotes}

\usepackage{colortbl}
\usepackage{tabulary}
\usepackage{multirow}

\newcommand{\tablestyle}[2]{\setlength{\tabcolsep}{#1}\renewcommand{\arraystretch}{#2}\centering
\small
}
\newcolumntype{x}[1]{>{\centering\arraybackslash}p{#1pt}}
\newcolumntype{y}[1]{>{\raggedright\arraybackslash}p{#1pt}}
\newcolumntype{z}[1]{>{\raggedleft\arraybackslash}p{#1pt}}

\newcommand\tablefontsize{%
  \fontsize{8pt}{9pt}\selectfont
}

\definecolor{codebg}{rgb}{0.97,0.97,0.97}
\definecolor{codeblue}{rgb}{0.25,0.5,0.5}
\definecolor{codekw}{rgb}{0.85, 0.18, 0.50}
\definecolor{codesign}{RGB}{0, 0, 255}
\definecolor{codefunc}{rgb}{0.85, 0.18, 0.50}

\lstdefinelanguage{PythonFuncColor}{
  language=Python,
  keywordstyle=\color{blue}\bfseries,
  commentstyle=\color{codeblue},
  stringstyle=\color{orange},
  showstringspaces=false,
  basicstyle=\ttfamily\small,
  literate=
    {True}{{\color{blue}True}}{1}
    {False}{{\color{blue}False}}{1}
    {inf}{{\color{blue}inf}}{1}
    {None}{{\color{blue}None}}{1}
    {kl_loss}{{\color{codefunc}kl\_loss}}{1}
    {gauss_log_kde}{{\color{codefunc}gauss\_log\_kde}}{1}
    {logsumexp}{{\color{codefunc}logsumexp}}{1}
    {detach}{{\color{codefunc}detach}}{1}
    {fill_diagonal_}{{\color{codefunc}fill\_diagonal\_}}{1}
    {cdist}{{\color{codefunc}cdist}}{1}
    {model}{{\color{codefunc}model}}{1}
    {compute_V}{{\color{codefunc}compute\_V}}{1}
    {randn}{{\color{codefunc}randn}}{1}
    {stopgrad}{{\color{codefunc}stopgrad}}{1}
    {mean}{{\color{codefunc}mean}}{1}
    {cat}{{\color{codefunc}cat}}{1}
    {eye}{{\color{codefunc}eye}}{1}
    {affinity}{{\color{codefunc}affinity}}{1}
    {softmax}{{\color{codefunc}softmax}}{1}
    {mse_loss}{{\color{codefunc}mse\_loss}}{1}
    {sqrt}{{\color{codefunc}sqrt}}{1}
    {split}{{\color{codefunc}split}}{1}
    {sum}{{\color{codefunc}sum}}{1}
    {concat}{{concat} }{1}
}

\usepackage{enumitem}

\usepackage{amsfonts}       %
\usepackage{amsmath}       %
\usepackage{amssymb}
\usepackage{amsthm}
\usepackage{xspace}
\usepackage{thmtools} %

\usepackage{placeins}

\usepackage{xr-hyper}
\usepackage{hyperref}%
\usepackage[capitalize]{cleveref}  %

\makeatletter
\DeclareRobustCommand\onedot{\futurelet\@let@token\@onedot}
\def\@onedot{\ifx\@let@token.\else.\null\fi\xspace}
\makeatother

\newcommand{\eg}{e.g\onedot}

\makeatletter
\newcommand*{\addFileDependency}[1]{%
  \typeout{(#1)}
  \@addtofilelist{#1}
  \IfFileExists{#1}{}{\typeout{No file #1.}}
}
\makeatother

\definecolor{ourblue}{rgb}{0.368,0.507,0.71}    %
\definecolor{ourdarkblue}{HTML}{354d72}
\definecolor{ourlightblue}{HTML}{b4c5dc}

\definecolor{ourorange}{rgb}{0.881,0.611,0.142} %
\definecolor{ourdarkorange}{HTML}{8f6213}
\definecolor{ourlightorange}{HTML}{efc785}

\definecolor{ourgreen}{rgb}{0.56,0.692,0.195}   %
\definecolor{ourdarkgreen}{HTML}{677f24}
\definecolor{ourlightgreen}{HTML}{cee298}

\definecolor{ourred}{rgb}{0.923,0.386,0.209}    %
\definecolor{ourdarkred}{HTML}{94300f}
\definecolor{ourlightred}{HTML}{f7c5b2}

\definecolor{ourviolet}{rgb}{0.528,0.471,0.701} %
\definecolor{ourdarkviolet}{HTML}{463b68}
\definecolor{ourlightviolet}{HTML}{bcb3d4}

\definecolor{ourbrown}{rgb}{0.772,0.432,0.102}  %
\definecolor{ourdarkbrown}{HTML}{905113}
\definecolor{ourlightbrown}{HTML}{f1c093}

\definecolor{ourazure}{rgb}{0.364,0.619,0.782}  %
\definecolor{ourdarkazure}{HTML}{2a5b79}
\definecolor{ourlightazure}{HTML}{a6cae0}

\definecolor{ourolive}{rgb}{0.572,0.586,0.}     %
\definecolor{ourdarkolive}{HTML}{5b5c00}
\definecolor{ourlightolive}{HTML}{e3e395}

\definecolor{ourgray}{RGB}{102,88,84}           %
\definecolor{ourdarkgray}{HTML}{362e2c}
\definecolor{ourlightgray}{HTML}{bcb1b0}

\definecolor{ourblue2}{RGB}{9,134,223} %
\definecolor{ourdarkblue2}{RGB}{5,97,164} %
\definecolor{ourlightblue2}{RGB}{132,201,250} %

\definecolor{ourorange2}{RGB}{224,90,18} %
\definecolor{ourdarkorange2}{RGB}{160,63,9} %
\definecolor{ourlightorange2}{RGB}{246,175,137} %

\definecolor{ouryellow2}{RGB}{227,213,25} %
\definecolor{ourdarkyellow2}{RGB}{177,166,17} %
\definecolor{ourlightyellow2}{RGB}{242,235,140} %

\definecolor{ourpink2}{RGB}{247,24,139} %
\definecolor{ourdarkpink2}{RGB}{164,4,86} %
\definecolor{ourlightpink2}{RGB}{250,163,207} %

\definecolor{ourgreen2}{RGB}{159,198,52} %
\definecolor{ourdarkgreen2}{RGB}{109,138,30} %
\definecolor{ourlightgreen2}{RGB}{209,228,154} %

\definecolor{ourgray2}{RGB}{124,124,115} %
\definecolor{ourdarkgray2}{RGB}{87,87,81} %
\definecolor{ourlightgray2}{RGB}{194,194,189} %

\graphicspath{{figures/}}
\newtheorem{theorem}{Theorem}
\newtheorem{lemma}[theorem]{Lemma}
\newtheorem{proposition}[theorem]{Proposition}
\newtheorem{corollary}[theorem]{Corollary}
\theoremstyle{definition}
\newtheorem{definition}[theorem]{Definition}
\newtheorem{remark}[theorem]{Remark}

\newcommand{\R}{\mathbb{R}}
\newcommand{\E}{\mathbb{E}}

\renewcommand{\P}{\mathcal{P}}
\renewcommand{\L}{\mathcal{L}}
\newcommand{\del}{\partial}
\renewcommand{\d}{\text{d}}
\newcommand{\sg}{\operatorname{sg}}

\newcommand{\vv}[1]{\mathbf{#1}}
\newcommand{\x}{\vv{x}}
\newcommand{\y}{\vv{y}}
\newcommand{\V}{\vv{V}}

\newcommand{\indep}{\perp \!\!\! \perp}
\newcommand{\idx}{\operatorname{idx}}

\newcommand{\yp}[1][]{\ifthenelse{\isempty{#1}}{{\textcolor{blue}{\vv{y}^{+}}}}{{\textcolor{blue}{\vv{y}^{+}_{\textcolor{black}{#1}}}}}}
\newcommand{\yn}[1][]{\ifthenelse{\isempty{#1}}{{\textcolor{orange}{\vv{y}^{-}}}}{{\textcolor{orange}{\vv{y}^{-}_{\textcolor{black}{#1}}}}}}

\newcommand{\Vp}[1]{{\textcolor{blue}{\vv{V}^{+}_{\textcolor{black}{#1}}}}}
\newcommand{\Vn}[1]{{\textcolor{orange}{\vv{V}^{-}_{\textcolor{black}{#1}}}}}

\newcommand{\Npos}{{N_\text{pos}}}
\newcommand{\Nneg}{{N_\text{neg}}}

\renewcommand{\varepsilon}{\mathbf{\epsilon}}

\newcommand{\Pot}{\Phi}  %

\newcommand{\Vs}{\vv{V}^\#}
\newcommand{\Vsp}[1]{{\textcolor{blue}{\vv{V}^{+\textcolor{black}{\#}}_{\textcolor{black}{#1}}}}}
\newcommand{\Vsn}[1]{{\textcolor{orange}{\vv{V}^{-\textcolor{black}{\#}}_{\textcolor{black}{#1}}}}}

\newcommand{\rbfsharp}{{$\text{log-KDE-Gauss}^\#$}}
\newcommand{\laplsharp}{{$\text{log-KDE-Laplace}^\#$}}
\newcommand{\driftlapl}{{Drifting-Laplace}}
\newcommand{\driftrbf}{{Drifting-Gauss}}
\newcommand{\naivedriftlapl}{{Naive-Laplace}}

\newcommand{\logkde}{{log-KDE}}

\hypersetup
{
  colorlinks=true,
  linkcolor=ourblue,
  urlcolor=ourviolet,
  citecolor=ourblue,
  linktoc=all,
}

\newcommand\blfootnote[1]{
    \begingroup
    \renewcommand\thefootnote{}\footnote{#1}
    \addtocounter{footnote}{-1}
    \endgroup
}

\title{Drifting Fields are not Conservative}
\author{%
  \textbf{Leonard T. Franz\textsuperscript{1,*}} \quad
  \textbf{Sebastian Hoffmann\textsuperscript{1,2,*}} \quad
  \textbf{Tim Weiland\textsuperscript{1}} \\[4pt]
  \textbf{Bernhard Schölkopf\textsuperscript{3}} \quad
  \textbf{Georg Martius\textsuperscript{1,3}} \\[8pt]
  \textsuperscript{1}Eberhard Karls Universität Tübingen \quad
  \textsuperscript{2}Max Planck Institute for Biogeochemistry \\[2pt]
  \textsuperscript{3}Max Planck Institute for Intelligent Systems \\[2pt]
  \textsuperscript{*}Equal Contribution%
}

\begin{document}
\maketitle
\blfootnote{Correspondence to \texttt{leonard-tobias.franz@uni-tuebingen.de} \& \texttt{shoffmann@bgc-jena.mpg.de}}

\begin{figure}[h]
    \centering \vspace{-1.5em}
    \includegraphics{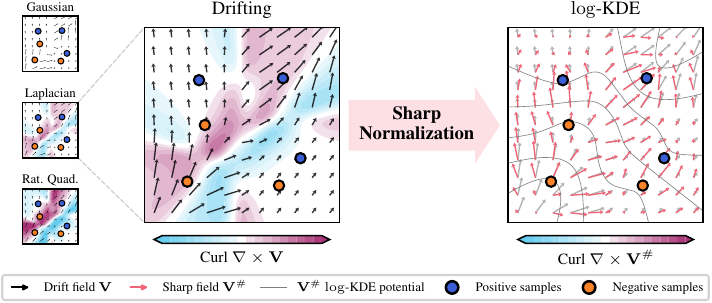}\vspace{-0.5em}
    \caption{\textbf{Sharp normalization.} Drift fields are generally not conservative, as seen by the non-vanishing curl field for the Laplacian and rational quadratic kernel (left panels). We propose a new normalization scheme via a \emph{sharp kernel} density estimate (KDE) that restores the field's conservatism (right). This allows us to reformulate the drift field as the gradient of a scalar potential, the \logkde{} potential, yielding a simplified and interpretable objective.}
    \label{fig:spotlight}
\end{figure}

\begin{abstract}
    \noindent
    Drifting models have recently gained attention for generating high-quality samples in a single forward pass.
    During training, they learn a push-forward map by following a vector-valued field, the \emph{drift field}.
    We ask whether this procedure is equivalent to optimizing a scalar loss and find that, in general, it is not: drift fields are \emph{not conservative} and cannot be written as the gradient of any scalar potential. 
    We identify the position-dependent normalization as the source of non-conservatism, with the Gaussian kernel as the unique radial exception.
    Guided by this, we introduce the \emph{sharp kernel} $k^\#$ and a sharp-normalized drift field that is conservative for general radial kernels.
    The resulting vector field is the gradient of a scalar potential that can be optimized directly using stochastic gradient descent.
    Moreover, the field has the form of a score difference of kernel density estimates, and gives exact equilibrium identifiability.
    Thus, sharp normalization closes the gap to related literature, such as Wasserstein gradient-flows and denoising score matching, also for non-Gaussian kernels.
    Empirically, sharp normalization preserves the performance of the original drifting objective, suggesting that the non-conservative flexibility is not required for high-quality generation.
\end{abstract}

\section{Introduction}
\label{sec:introduction}
Generative modeling via \emph{drifting}~\citep{deng_generative_2026} has recently attracted attention for providing high quality samples, one-step inference, and a novel training procedure, framed as fixed point iteration of generated samples. The approach formalizes training as repeatedly moving the samples~$f_{\theta}(\varepsilon)$ toward the data distribution $p$ via a so-called ``drift field''~$\V$ and then updating parameters~$\theta$ such that the generations~$f_{\theta'}(\varepsilon)$ reproduce this movement. Unlike most of machine learning, no scalar loss plays a distinguished role in the training. This  raises a natural question: is drifting implicitly optimizing a scalar loss, and if so, what is the corresponding objective?

To make our question more precise, consider the loss function employed in the drifting formalism for inducing the sample transport along the vector-valued drifting field $\V_{p,q_\theta}$:
\begin{align}
\label{eq:drifting-loss-intro}
    \L^\text{drift}(\theta) &= \E_{\varepsilon}\bigl[
        \|f_\theta(\varepsilon) - \operatorname{sg}(f_{\theta}(\varepsilon) + \V_{p,q_\theta}(f_{\theta}(\varepsilon)))\|^2
    \bigr]&\text{(``stop-gradient objective'').}
\end{align}
Here, $f_\theta:\R^n \to \R^n$ is the generator, $q_\theta$ is the push-forward distribution $(f_\theta)_\ast p_\epsilon$, and $\operatorname{sg}$ denotes the stop-gradient operator. We observe that \Cref{eq:drifting-loss-intro} can implement \emph{any} transport field $\V:\R^n \to \R^n$ for the generated outputs, including non-conservative ones, \eg with nonzero curl ($\nabla \times \V \neq 0$). In contrast, transport fields induced by optimizing a scalar field via $\nabla_\theta \Pot(f_\theta)$ are always conservative, making \Cref{eq:drifting-loss-intro} strictly more general. We ask: Is $\V$ used in the drifting framework conservative, and if not, is this additional flexibility required, or is it sufficient to follow a conservative field?

\citeauthor{deng_generative_2026} take a step toward answering this question by relating the gradient of the squared MMD loss~\citep{gretton2006kernel} to drifting field transport. They find that in the Gaussian case the MMD gradient field shares the same direction as the drift field but lacks position-dependent normalization, and that optimizing the MMD does not work sufficiently well in practice. They argue that the advantage of the drifting framework lies precisely in the stop-gradient's extra generality, as it allows them to incorporate position-dependent normalization and infer that no scalar loss can reproduce that.

In this work we show that this claim is both correct and overstated. It is correct in that \citeauthor{deng_generative_2026}'s specific normalization is not reproducible by any loss: for most kernel choices, the drift field is \emph{not conservative}---no scalar potential has it as its gradient. It is overstated because loss-based training with normalization \emph{is} possible---just with a different one. We introduce the \emph{sharp kernel} $k^\#$, which, substituted for $k$ in \citeauthor{deng_generative_2026}'s normalization, yields a conservative field whose potential---the \logkde{} potential---can serve as a loss. This potential is a KL divergence between sharp-kernel KDEs of generations and data, thus making the training objective clear and intuitive.%

Beyond recovering a scalar objective, the sharp formulation casts drifting as score-difference transport. The Wasserstein gradient flow of the KL divergence (equivalently, the Fokker--Planck equation) transports particles with velocity $\nabla\log p-\nabla\log q_t$. Our sharp field takes the same form, but being based on sharp-kernel KDEs: $\Vs_{p,q}=\nabla\log p_{\mathrm{KDE}}[k^\#]-\nabla\log q_{\mathrm{KDE}}[k^\#]$. Sharp normalization thus extends the score-difference transport interpretation of drifting to non-Gaussian radial kernels.

Our experiments confirm that sharp normalization performs comparably to the original drift field, suggesting that the non-conservative component does not contribute significantly to performance. The reformulation of drifting model training with a scalar potential allows us to simplify the implementation and removes the requirement to explicitly construct a sample transport field $\V$.

\section{Background}
\label{sec:background}

\paragraph{Kernels.}

We use the term \emph{kernel} to denote a symmetric bivariate function $k: \R^n \times \R^n \to \R$ that measures similarity between two points. Much of our analysis does not depend on positive definiteness of $k$.

\paragraph{Drifting models.}

A generator network $f_\theta: \R^n \to \R^n$ maps noise $\varepsilon \sim p_\varepsilon \in \P(\R^n)$ to samples $\x = f_\theta(\varepsilon)$, inducing a pushforward distribution $q_\theta = (f_\theta)_{\ast} p_\varepsilon$. Drifting models evolve $q_\theta$ during training via a \emph{drift field} $\V_{p,q_\theta}: \R^n \to \R^n$ that transports generated samples $\x$ toward the data distribution $p$:
\begin{equation}
\label{eq:drift-sample-update}
    \x_{i+1} = \x_i + \V_{p,q_{\theta_i}}(\x_i).
\end{equation}
\citeauthor{deng_generative_2026} construct the drift field from an attractive component $\Vp{p}$ pulling toward data samples $\yp$ and a repulsive component $\Vn{q}$ pushing away from generated samples $\yn$:
\begin{align}
\label{eq:drift-field-components}
    \begin{aligned}
        \V^\text{drift}_{p,q}[k](\x) :=
        \underbrace{
        \frac{1}{Z_p(\x)} \E_{\yp \sim p}[k(\x, \yp)(\yp - \x)]}_{
            := \Vp{p}^\text{drift}
        } -
        \underbrace{
        \frac{1}{Z_q(\x)} \E_{\yn \sim q}[k(\x, \yn)(\yn - \x)]}_{
            := \Vn{q}^\text{drift}
        }
    \end{aligned}
\end{align}
where $k$ is a kernel and $Z_p(\x) := \E_{\yp \sim p}[k(\x, \yp)]$, $Z_q(\x) := \E_{\yn \sim q}[k(\x, \yn)]$ are normalization factors. By construction, $\V_{p,q}$ is anti-symmetric ($\V_{p,q} = -\V_{q,p}$) and vanishes when $q = p$.
\citet{deng_generative_2026} formulate training as a fixed-point iteration: at step $i$, the network should satisfy $f_{\theta_{i+1}}(\varepsilon) \leftarrow f_{\theta_i}(\varepsilon) + \V_{p,q_{\theta_i}}(f_{\theta_i}(\varepsilon))$, which is implemented via the stop-gradient objective~\eqref{eq:drifting-loss-intro}. In the limit $i\to\infty$, the training likely approaches a fixed point where the field $\V$ vanishes ($\V = \mathbf{0}$). Under certain regularity conditions on $p$ one then obtains $p = q$.

\begin{definition}[Radial kernel]
\label{def:radial}
A kernel is called \emph{radial} if it depends only on the squared distance: $k(\x, \y) = \phi(\|\x - \y\|^2)$ with the radial profile $\phi:\R \to \R$.
\end{definition}

\begin{definition}[Kernel Density Estimate]
\label{def:kde}
Given a distribution $p$ and a kernel $k$, the \emph{kernel density estimate} (KDE) of $p$ is
\begin{equation}
\label{eq:kde}
    p_{\mathrm{KDE}}[k](\x) := \E_{\y \sim p}[k(\x, \y)].
\end{equation}
\end{definition}

The normalization factors in \cref{eq:drift-field-components} are exactly the KDEs: $Z_p(\x)\!=\!p_{\mathrm{KDE}}(\x)$ and $Z_q(\x)\!=\!q_{\mathrm{KDE}}(\x)$. When $k$ is positive definite, an alternative interpretation is the pointwise evaluation of the kernel mean embeddings $\mu_p=\mathbb E_{y\sim p}k(\cdot,y)$ and $\mu_q=\mathbb E_{y\sim q}k(\cdot,y)$ \cite{SonZhaSmoGreetal08}. %

\paragraph{Conservatism.}
\label{par:background-conservative}

Whether a transport field can be cast as loss-based training depends on whether it arises as the gradient of a scalar potential---a property called \emph{conservatism}.
\begin{definition}[Conservative vector field]
A vector field $\V: \R^n \to \R^n$ is \textbf{conservative} if there exists a scalar field $\Pot: \R^n \to \R$ such that $\V(\x) = -\nabla_\x \Pot(\x)$ for all $\x \in \R^n$.
\end{definition}
This is equivalent to path-independence of line integrals, $\oint_\gamma \V \cdot \d\x = 0$ for every closed curve $\gamma$; we adopt the gradient formulation as more directly useful here. A standard characterization is that:
\begin{lemma}
\label{lem:jacobian_conservative}
A vector field $V: \R^n \to \R^n$ is conservative if and only if its Jacobian is symmetric, i.e.
\begin{align*}
    \frac{\del\V_j}{\del x_i} &= \frac{\del \V_i}{\del x_j} \quad \text{for all } \quad 1 \leq i, j \leq n. \tag*{\text{\textit{Proof.} See \cref{app:conservative-proof}. \qed}}
\end{align*}
\end{lemma}%
The Jacobian's off-diagonal asymmetry quantifies the deviation from conservatism, summarized by a generalized $n$-dimensional \emph{curl} $\nabla \times \V$ (Def.~\ref{def:curl}) that agrees with the classical 2D and 3D notions.

\section{Drift Fields Are Not Generally Conservative}
\label{sec:conservative}
Plotting the curl of the drift field $\V^\text{drift}_{p,q}$ for different kernel choices reveals that, in general, the drift field is \emph{not} conservative (see \cref{fig:spotlight}, left). While the Gaussian kernel yields a curl of exactly zero, the Laplacian kernel and the rational quadratic kernel both produce nonzero curl. This means no scalar potential $\Pot$ exists whose gradient recovers these drift fields.\footnote{Note that the curl also persists when plotting only the positive or negative component of the field. We also show that a 2D counterexamples suffices constructing counter examples in arbitrary dimension (see \Cref{sec:nonconservatism-arbitrary-dim}).}

The source of the non-conservatism is the position-dependent normalization by $Z(\x)$. To see this, note that the unnormalized drifting fields $\E_p[k(\x, \y)(\y - \x)]$ coincides with the gradient of the scalar MMD objective for a particular kernel choice. We analyze this in the following section.

\subsection{Nonconservatism is Caused by the Normalization}
\label{sec:mmd-connection}

\begin{table}[t]
    \centering
    \caption{\textbf{Sharp and flat kernels for common radial kernels.} Here, $r := \|\x-\y\|$. The Laplacian is the $\nu = \tfrac{1}{2}$ boundary case of the half-integer Mat\'ern family; Mat\'ern-specific notation ($c$, $\sigma_\#$, $\sigma_\flat$, $Q_p$) and derivations are given in \Cref{app:matern-derivation}.}
    \label{tab:kernel-examples}
    \tablestyle{6pt}{1.02}
    \tablefontsize
    \begin{tabular}{l | c | c | c | c}
    \toprule
     & Gaussian & Laplacian & Mat\'ern ($\nu = p + \tfrac{1}{2}$) & Rational quadratic \\
    \midrule
    \textbf{Kernel} $k(\x, \y)$ & $\exp(-r^2/2\sigma^2)$ & $\exp(-r/\sigma)$ & $k_{\nu,\sigma}(\x,\y) := Q_p(cr)\, e^{-cr}$ & $(1 + r^2/\sigma^2)^{-2}$ \\[0.5em]
    \textbf{Sharp} $k^\#(\x,\y)$ & $\sigma^2\, k(\x,\y)$ & $\sigma(r + \sigma)\, k(\x,\y)$ & $\sigma^2\, k_{\nu+1,\,\sigma_\#}(\x,\y)$ & $\frac{\sigma^2}{2}\, k(\x,\y)^{1/2}$ \\[0.5em]
    \textbf{Flat} $k^\flat(\x,\y)$ & $\sigma^{-2}\, k(\x,\y)$ & $k(\x,\y) / (\sigma r)$ & $\frac{\nu}{(\nu-1)\sigma^2}\, k_{\nu-1,\,\sigma_\flat}(\x,\y)$ ($\nu \geq \tfrac{3}{2}$) & $\frac{4}{\sigma^2}\, k(\x,\y)^{3/2}$ \\
    \bottomrule
    \end{tabular}
\end{table}

\citet{deng_generative_2026} observe in Appendix~C.2 that the drift field is closely related to the gradient of the squared MMD loss~\citep{gretton2006kernel}, and, for the Gaussian kernel, derive a proportionality between the two. We reiterate this relation to show that the unnormalized field \emph{is} conservative, and only through the introduction of the normalization, we get nonzero curl. Adopting our notation and after some transformations (see \cref{app:mmd-derivation}) the squared MMD loss becomes
\begin{align}
    \begin{aligned}
    \L^{\text{MMD}^2}[k](\theta) &=
    \E_{\x\sim q_\theta}\left[
        \Pot_{p, \sg(q_\theta)}^\text{MMD}[k](\x)
    \right],\\
    \text{where} \qquad \Pot_{p, \sg(q_\theta)}^\text{MMD}[k](\x) &=
    \E_{\yn \sim \sg(q_\theta)}[k(\x, \yn)] -
    \E_{\yp \sim p}[k(\x, \yp)].
    \end{aligned}
\end{align}

For a radial kernel $k(\x, \y) = \phi(\|\x-\y\|^2)$, we identify the drift field $\V^\text{MMD}_{p,q}[k](\x)$ of this objective as
\begin{align}
    \begin{aligned}
    \V^\text{MMD}_{p,q}[k](\x) &:= 
        - \nabla_\x \Pot_{p, \sg(q_\theta)}^\text{MMD}[k](\x) \\
        &=
        \E_{\yp \sim p}[-2\phi'(\|\x-\yp\|^2)(\yp - \x)] -
        \E_{\yn \sim q}[-2\phi'(\|\x-\yn\|^2)(\yn - \x)].
    \end{aligned}
\end{align}
For the Gaussian kernel, where $-2 \phi'=\frac{1}{\sigma^2}\phi$, this is proportional to the unnormalized drifting field:
\begin{equation}
    \V_{p,q}^\text{unnorm.}[k](\x):=
    \E_{\yp \sim p}[k(\x, \yp)(\yp - \x)] -
    \E_{\yn \sim q}[k(\x, \yn)(\yn - \x)]
\end{equation}
 More generally if $-2\phi'(\|\x-\y\|^2)$ happens to align with another kernel, then we get its unnormalized drift field. The need to relate kernels losses like the MMD and those emerging in the corresponding drifting fields, motivates the following definition.

\paragraph{Flat and Sharp Kernels}
\label{sec:sharp-flat-def}
\begin{definition}[Sharp and flat] \label{def:sharp-flat}
The \textbf{sharp} $k^\#$ and \textbf{flat} $k^\flat$ of a kernel $k$ are kernels satisfying
\begin{equation}
\label{eq:sharp_flat}
    \underbrace{\nabla_\x k^\#(\x, \y) = k(\x, \y)(\y - \x)}_{\text{sharp}}
    \qquad
    \underbrace{\nabla_\x k(\x, \y) = k^\flat(\x, \y)(\y - \x)}_{\text{flat}}.
\end{equation}
\end{definition}
The notation is borrowed from musical accidentals: the sharp $k^\#$ raises the kernel (antiderivative), while the flat $k^\flat$ lowers it (derivative). By substituting both sides of \cref{eq:sharp_flat} into each other, it follows directly that $(k^\#)^\flat\!=\!(k^\flat)^\#\!=\!k$. Neither operation is guaranteed to exist in general nor preserve positive definiteness. The flat $k^\flat$ requires $k$ to be differentiable and may be singular where $\|\x - \y\| = 0$ (e.g., Laplacian kernel). %
For a radial kernel $k(x,y) = \phi(\|\x - \y\|^2)$, both admit closed forms:\looseness-1
\begin{equation}
\label{eq:sharp-flat-radial}
    k^\#(\x, \y) = \frac{1}{2}\int_{\|\x-\y\|^2}^\infty \phi(r)\, \d r, \qquad
    k^\flat(\x, \y) = -2\phi'(\|\x - \y\|^2).
\end{equation}

\begin{figure}[t]
\centering
\includegraphics{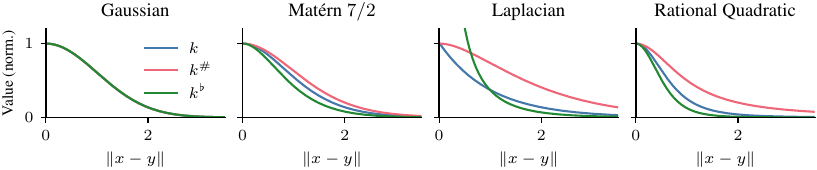}
\caption{\textbf{Normalized radial profiles of flat and sharp kernels}. For the Laplacian, Mat\'ern-$7/2$, and rational quadratic kernels, the sharp and flat profiles have distinct shapes, while for the Gaussian all agree up to scale.}
\label{fig:sharp-flat-curves}
\end{figure}

The flat-sharp formalism clarifies the relationship between MMD gradients and drifting fields:
\begin{proposition}
\label{lem:conservative}
Let $k(\x,\y)$ be a kernel for which $k^\#(\x, \y)$ exists, then 
\begin{equation}
    \V_{p,q}^\textnormal{unnorm.}[k](\x) = -\nabla_\x \Pot_{p, \sg(q_\theta)}^\textnormal{MMD}[k^\#](\x).
\end{equation}
\end{proposition}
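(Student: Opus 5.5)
The plan is to compute the gradient of the MMD potential built from $k^\#$ directly, and then apply the defining identity of the sharp kernel (\cref{def:sharp-flat}) term by term. By definition,
\begin{equation*}
    \Pot_{p, \sg(q_\theta)}^\textnormal{MMD}[k^\#](\x) = \E_{\yn \sim \sg(q_\theta)}[k^\#(\x, \yn)] - \E_{\yp \sim p}[k^\#(\x, \yp)].
\end{equation*}
The stop-gradient only freezes the distribution $q$. The expectation is still a function of $\x$, and its $\x$-gradient is what we need. So we treat $q = q_\theta$ as fixed throughout.

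\textbf{Step 1.} Interchange $\nabla_\x$ with the two expectations:
\begin{equation*}
    \nabla_\x \E_{\y\sim p}[k^\#(\x,\y)] = \E_{\y\sim p}[\nabla_\x k^\#(\x,\y)],
\end{equation*}
and similarly for $q$. \textbf{Step 2.} Substitute the sharp identity $\nabla_\x k^\#(\x,\y) = k(\x,\y)(\y-\x)$ into each term. This gives
\begin{equation*}
    -\nabla_\x \Pot^\textnormal{MMD}[k^\#](\x) = \E_{\yp\sim p}[k(\x,\yp)(\yp-\x)] - \E_{\yn\sim q}[k(\x,\yn)(\yn-\x)].
\end{equation*}
The overall minus sign flips the order of the $p$ and $q$ terms, so the right-hand side is exactly $\V^\textnormal{unnorm.}_{p,q}[k](\x)$.

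The only nontrivial point is Step 1, the exchange of differentiation and integration. I would justify it by dominated convergence. The identity $\nabla_\x k^\#(\x,\y) = k(\x,\y)(\y-\x)$ must be integrable in $\y$, locally uniformly in $\x$. This holds, for instance, if $|k(\x,\y)|\,\|\y-\x\|$ is bounded, which is true for the Gaussian, Laplacian, Matérn and rational quadratic kernels. Alternatively it holds if $p$ and $q$ have finite first moments and $k$ is bounded. It also requires $k^\#(\x,\cdot)$ to be integrable so that the potential is finite.

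For radial kernels, the closed form \cref{eq:sharp-flat-radial} shows $k^\#$ is bounded whenever $\int_0^\infty \phi < \infty$. Under that condition the required integrability is immediate. I would state these mild integrability conditions as implicit in the assumption that ``$k^\#$ exists'' and the expectations are well-defined, rather than belabor them.
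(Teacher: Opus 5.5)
Your proof is correct and follows the paper's argument: treat $q$ as frozen by the stop-gradient, pass $\nabla_\x$ inside the two expectations of $\Pot^\textnormal{MMD}[k^\#]$, and apply the sharp identity $\nabla_\x k^\#(\x,\y) = k(\x,\y)(\y-\x)$ term by term. Your dominated-convergence justification for exchanging differentiation and expectation is a useful addition, since the paper leaves that step implicit.
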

\begin{proof}
\begin{align}
    \begin{aligned}
    -\nabla_\x \Pot_{p, \sg(q_\theta)}^\text{MMD}[k^\#](\x) &= \E_{\yp \sim p}[\nabla_\x k^\#(\x, \yp)] - \E_{\yn \sim \sg(q_\theta)}[\nabla_\x k^\#(\x, \yn)] \\
    &= \underbrace{\E_{\yp \sim p}[k(\x, \yp)(\yp - \x)]}_
    {:=\Vp{p}^\text{MMD}}
    - \underbrace{\E_{\yn \sim q}[k(\x, \yn)(\yn - \x)]}_
    {:=\Vn{q}^\text{MMD}} \\
    &= \V_{p,q}^\text{unnorm.}[k](\x) .
    \end{aligned}
\end{align}%
\end{proof}

It is thus conservative and has zero curl.
This shows us that the non-conservatism is a byproduct of the normalization. %

\paragraph{Radiality is necessary.}
\label{sec:radial-kernels}

We have shown that if $k$ is assumed to be radial, we find a potential given by the squared MMD whose gradient produces the unnormalized drifting field, which is thus, by definition, conservative. The reverse is also true. We refer the reader to \cref{lem:radial-kernel} for the proof.

\subsection{The Gaussian Kernel: An Exception where Normalization Retains Conservatism}
\label{sec:gaussian-potential}
The exception where the normalization does not lead to non-conservatism is the Gaussian kernel (see \Cref{fig:spotlight}). We are thus able to identify the exact scalar potential whose gradient produces the field:
\begin{proposition}
\label{prop:gaussian-kde}
For the Gaussian kernel with bandwidth $\sigma$, the drift sub-fields satisfy
\begin{equation}
    \Vp{p}(\x) = \sigma^2 \nabla_\x \log p_{\mathrm{KDE}}[k](\x), \qquad
    \Vn{q}(\x) = \sigma^2 \nabla_x \log q_{\mathrm{KDE}}[k](\x).
\end{equation}
\end{proposition}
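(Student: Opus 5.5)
The plan is to compute the gradient of the log-KDE directly and compare it with the normalized attractive component in \cref{eq:drift-field-components}. For the Gaussian kernel $k(\x,\y)=\exp(-\|\x-\y\|^2/2\sigma^2)$, the key identity is that the kernel is its own flat up to scale:
\begin{equation*}
\nabla_\x k(\x,\y) = \tfrac{1}{\sigma^2}\,k(\x,\y)(\y-\x).
\end{equation*}
Equivalently, by \cref{eq:sharp-flat-radial}, $k^\flat=\sigma^{-2}k$, which is the entry recorded in \cref{tab:kernel-examples}. We have $-2\phi'(s)=\sigma^{-2}\phi(s)$ for $\phi(s)=e^{-s/2\sigma^2}$, so this is a one-line verification.

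Next we differentiate the KDE from \cref{def:kde} under the expectation:
\begin{equation*}
\nabla_\x p_{\mathrm{KDE}}[k](\x)=\E_{\yp\sim p}\bigl[\nabla_\x k(\x,\yp)\bigr]=\tfrac{1}{\sigma^2}\E_{\yp\sim p}\bigl[k(\x,\yp)(\yp-\x)\bigr].
\end{equation*}
Since $k>0$ everywhere, $Z_p(\x)=p_{\mathrm{KDE}}[k](\x)>0$, and the chain rule gives $\nabla_\x\log p_{\mathrm{KDE}}[k]=\nabla_\x p_{\mathrm{KDE}}[k]/Z_p$. Multiplying by $\sigma^2$ recovers exactly $\frac{1}{Z_p(\x)}\E_p[k(\x,\yp)(\yp-\x)]=\Vp{p}(\x)$. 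The same computation with $q$ in place of $p$ gives the repulsive component $\Vn{q}$. As a consequence, $\V^\text{drift}_{p,q}=-\nabla_\x\bigl(\sigma^2\log q_{\mathrm{KDE}}-\sigma^2\log p_{\mathrm{KDE}}\bigr)$ is conservative.

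The only step needing care is justifying the interchange of gradient and expectation. Two bounds make this work. First, $\|\nabla_\x k(\x,\y)\|=\sigma^{-2}\|\y-\x\|e^{-\|\x-\y\|^2/2\sigma^2}$ is uniformly bounded by $\sigma^{-1}e^{-1/2}$, since $te^{-t^2/2\sigma^2}$ is maximized at $t=\sigma$. Second, $k\le 1$. Dominated convergence therefore applies for any probability measure $p$, with no moment assumptions. Positivity of $Z_p$, which makes the logarithm well defined, follows because the Gaussian kernel is strictly positive. I expect no real obstacle beyond stating these regularity facts.
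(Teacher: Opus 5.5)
Your proposal is correct and takes essentially the same route as the paper: use $\nabla_\x k(\x,\y)=\sigma^{-2}k(\x,\y)(\y-\x)$ (i.e.\ $k^\flat=\sigma^{-2}k$), move the gradient inside the KDE, and divide by $Z_p$. The paper leaves two points implicit that you make explicit: the interchange of gradient and expectation, via your uniform bound $\sigma^{-1}e^{-1/2}$ and dominated convergence, and the strict positivity of $Z_p$.
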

\begin{proof}
We compute $\nabla_\x \log q_{\mathrm{KDE}}[k](\x) = \frac{\nabla_\x q_{\mathrm{KDE}}[k](\x)}{q_{\mathrm{KDE}}[k](\x)}$. Using \cref{def:kde} and that for Gaussian kernels, we have $\nabla_\x k(\x, \y) = \sigma^{-2} k(\x, \y)(\y - \x)$ (See Corollary~\ref{corr:gaussian_kernel_sharp_proof} and Definition \ref{def:sharp-flat}), we obtain:
\begin{equation*}
    \nabla_\x \log q_{\mathrm{KDE}}[k](\x) = \frac{\E_q[\nabla_\x k(\x, \y)]}{\E_q[k(\x, \y)]}
    = \frac{1}{\sigma^2} \frac{\E_q[k(\x, \y)(\y - \x)]}{Z_q(\x)}
    = \frac{1}{\sigma^2} \Vn{q}(\x).
\end{equation*}
The same argument applies analogously to $\Vp{p}(\x)$.
\end{proof}

The composite drift field defined in \cref{eq:drift-field-components} thus reduces to 
\begin{equation}
    \V_{p,q}(\x)
    = \sigma^2(\nabla_\x \log p_{\mathrm{KDE}}[k](\x)
    - \nabla_\x \log q_{\mathrm{KDE}}[k](\x))
    = -\sigma^2 \nabla_\x \log\left(\frac{q_{\mathrm{KDE}}[k](\x)}{p_{\mathrm{KDE}}[k](\x)}\right).
\end{equation}

\section{Sharp Normalization}
\label{sec:sharp-flat}
Having found the normalization of the drifting field to be the culprit for nonconservatism, and that, in some cases, normalization does \emph{not} destroy conservatism we propose a new normalization which restores conservatism for \emph{any} radial kernel. 
This normalization is given by
\begin{equation}
\label{eq:sharp-kde}
    Z_p^\#(\x) := \E_{\yp \sim p}[k^\#(\x, \yp)], \qquad Z_q^\#(x) := \E_{\yn \sim q}[k^\#(\x, \yn)] 
\end{equation}
for the positive and negative fields. Note that $Z_p^\# = p_\mathrm{KDE}[k^\#]$ and $Z_q^\# = q_\mathrm{KDE}[k^\#]$.
By the defining property of the sharp kernel, $\nabla_x k^\#(x,y) = k(x,y)(y-x)$, the drift numerator is the gradient of the sharp KDE. Dividing by the corresponding sharp KDE therefore recovers a log-KDE score:
\begin{align}
\label{eq:sharp-field}
    \begin{aligned}
        \Vsp{p}(\x) &:= \frac{1}{Z_p^\#(\x)} \E_{\yp \sim p}[k(\x, \yp)(\yp - \x)] = \nabla_\x \log p_\mathrm{KDE}[k^\#](\x), \\
        \Vsn{q}(\x) &:= \frac{1}{Z_q^\#(\x)} \E_{\yn \sim q}[k(\x, \yn)(\yn - \x)] = \nabla_\x \log q_\mathrm{KDE}[k^\#](\x).
    \end{aligned}
\end{align}
\begin{definition}[Sharp-normalized drift field] We define the sharp-normalized drift field as 
\begin{align}
\Vs_{p,q}(\x) &:= \Vsp{p}(\x) - \Vsn{q}(\x)
= -\nabla_\x \log \left(
        \frac
        {q_{\mathrm{KDE}}[k^\#](\x)}
        {p_{\mathrm{KDE}}[k^\#](\x)}
        \right)
\end{align}
\end{definition}

\paragraph{KDE-smoothed score transport.}
The identity above reveals that sharp-normalized drifting is a score-difference field for the smoothed densities $p^\#:=p_{\mathrm{KDE}}[k^\#]$ and $q^\#:=q_{\mathrm{KDE}}[k^\#]$:
\begin{equation}
\label{eq:sharp-score-difference}
    \Vs_{p,q}(\x)
    =
    \nabla_\x\log p^\#(\x)-\nabla_\x\log q^\#(\x).
\end{equation}
This is the KDE analogue of the velocity $\nabla\log p-\nabla\log q_t$ appearing in KL Wasserstein gradient flow. When $k^\#$ is nonnegative and integrable, it can be normalized as a corruption kernel, so $\nabla\log p^\#$ is also a denoising score: the posterior average of conditional kernel scores. The Gaussian case is special because $k^\#\propto k$, so the original drifting normalization already uses the correct smoothed density. For non-Gaussian kernels, sharp normalization restores this score-flow structure.

\paragraph{Scalar potential training.}
Since $\Vs_{p,q}(\x)$ is conservative, the stop-gradient training objective simplifies. In general, when $\V_{p,q}(\x) = -\nabla_\x \Pot_{p, q}(\x)$ for some scalar potential $\Pot_{p, q}: \R^n \to \R$, we show (see derivation in Appendix~\ref{app:conservative_training}) that
\begin{align}\label{eq:loss_identity}
    \begin{aligned}
        \nabla_\theta \E_{\varepsilon}\left[\L^\text{drift}(\theta)\right]
        = \;2\nabla_\theta \E_{\x\sim q_\theta}
          \left[\Pot_{p, \sg(q_\theta)}(\x)\right].
    \end{aligned}
\end{align}
That means training reduces, up to a positive scalar factor, to performing a gradient descent step with the
scalar field $\E_{\x\sim q_\theta}[\Pot_{p, \sg(q_\theta)}(\x)]$ at each step, without explicitly
constructing the transport field $\V$.
This moves away from the $V$-centric framework of \citet{deng_generative_2026}, replacing the explicit construction and application of a transport field with a single scalar loss that can be implemented and optimized straightforwardly with standard automatic differentiation.

Applying this to the sharp-normalized field yields the \logkde{} loss:
\begin{equation}
    \label{eq:sharp-loss}
    \boxed{
        \L^{\text{\logkde}}_{p,q}(\theta) := \E_{\x \sim q_\theta}\left[\log\left(
        \frac
        {q_{\mathrm{KDE}}[k^\#](\x)}
        {p_{\mathrm{KDE}}[k^\#](\x)}
        \right)\right]
    }.
\end{equation}
Importantly, $q_{\mathrm{KDE}}[k^\#](\x) = \E_{\yn \sim \sg(q_\theta)}\left[k^\#(\yn, \x)\right]$ is expected to only depend on $\theta$ through $\x$ and not via $\yn$ here.
Note that the potential $\Pot$ depends on the current generated distribution $q_\theta$ through the stop-gradient operation $\sg$.
This means the loss landscape shifts between training steps as $q_\theta$ evolves, and we are not descending a single fixed objective landscape, but a smoothly changing sequence of landscapes, inherently different from, for example, the MMD loss where a truly fixed potential exists.
Note that the loss is structurally similar to the KL divergence, with the exception of the sampling distribution $q_\theta$. As shown in \Cref{alg:our_loss}, for exponential-style kernels, the above objective can be implemented in a numerically stable way using the \texttt{logsumexp} operation. %

\begin{proposition}[Identifiability of sharp-normalized equilibria]
\label{prop:sharp-identifiability}
Assume $k^\#$ is strictly positive, continuously differentiable in its first argument, characteristic \cite{SimSch18}, and has constant finite mass $\int_{\R^n} k^\#(\x,\y)\,\d \x = c_\# \in (0,\infty)$ for every $\y$. If the domain is connected, then $\Vs_{p,q}(\x)=0$ for all $\x$ if and only if $p=q$. %
\end{proposition}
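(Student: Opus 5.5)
The direction $p=q\Rightarrow \Vs_{p,q}\equiv 0$ is immediate. In that case $p^\#:=p_{\mathrm{KDE}}[k^\#]$ and $q^\#:=q_{\mathrm{KDE}}[k^\#]$ coincide, so by \cref{eq:sharp-score-difference} the two log-gradients cancel pointwise. Strict positivity of $k^\#$ guarantees that both sharp KDEs are strictly positive, so the logarithms and the quotient in the definition of $\Vs_{p,q}$ are well defined everywhere. For the converse we work directly with the score-difference form \cref{eq:sharp-score-difference}, $\Vs_{p,q}=\nabla_\x\log(p^\#/q^\#)$, and argue in three steps.

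\emph{Step 1 (the log-ratio is constant).} Under the differentiability assumption, $p^\#$ and $q^\#$ are $C^1$; differentiation under the integral sign is justified by a local domination of $\nabla_\x k^\#$, which we take as part of the regularity hypothesis. Both are also strictly positive, so $g:=\log(p^\#/q^\#)$ is $C^1$. If $\Vs_{p,q}\equiv0$, then $\nabla g\equiv 0$. On a connected open domain this forces $g$ to be constant, by the mean value theorem along polygonal paths. Hence $p^\#=C\,q^\#$ for some constant $C=e^{g}>0$.

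\emph{Step 2 (the constant equals $1$).} We integrate both sides over $\R^n$ and apply Tonelli's theorem, which is valid since $k^\#>0$. Using the constant-mass assumption,
\begin{equation*}
\int p^\#(\x)\,\d\x=\E_{\y\sim p}\Bigl[\int k^\#(\x,\y)\,\d\x\Bigr]=c_\#,
\end{equation*}
and likewise $\int q^\#=c_\#$. Since $c_\#\in(0,\infty)$, the relation $p^\#=Cq^\#$ gives $c_\#=C c_\#$, so $C=1$ and $p^\#=q^\#$ pointwise.

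\emph{Step 3 (characteristic property).} The equality $p^\#=q^\#$ says exactly that the kernel mean embeddings $\mu_p$ and $\mu_q$ of $k^\#$ agree as functions. Because $k^\#$ is characteristic \cite{SimSch18}, the embedding $p\mapsto \mu_p$ is injective on probability measures, and therefore $p=q$.

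The main obstacle is Step 2. Equality of the scores only yields equality of $p^\#$ and $q^\#$ up to a multiplicative constant, and the mean embedding of a characteristic kernel is injective on probability measures, not on arbitrary positive multiples of them. The constant-mass hypothesis is exactly what pins $C=1$. A secondary technical point is that the characteristic property must be applied to the kernel $k^\#$ itself, and that the domain on which $\Vs_{p,q}$ vanishes must be connected so that Step 1 goes through; both are supplied by the hypotheses.
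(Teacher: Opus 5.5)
Your proposal is correct and follows essentially the same route as the paper's proof. Both show the log-ratio is constant on the connected domain, integrate with Fubini/Tonelli using the constant-mass assumption to force $C=1$, and conclude $p=q$ from injectivity of the sharp KDE map under the characteristic property; your extra remarks on differentiating under the integral sign and on well-definedness of the logarithms only make explicit what the paper takes for granted.
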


\emph{Proof sketch.}
By \Cref{eq:sharp-field}, $\Vs_{p,q}=0$ implies that the log-ratio $q_{\mathrm{KDE}}[k^\#]/p_{\mathrm{KDE}}[k^\#]$ has zero gradient. On a connected domain, this ratio must be constant. Since $k^\#$ has the same finite mass $c_\#$ around every point, both KDEs integrate to $c_\#$, which fixes the constant to one. Therefore the two sharp KDE embeddings agree pointwise, and characteristicness of $k^\#$ gives $p=q$. The converse direction is immediate by construction, as already observed for drifting fields by \citet{deng_generative_2026}. Hence sharp-normalized drifting has exact equilibrium identifiability. The full proof is given in \Cref{app:sharp-identifiability}. Compared to the companion-elliptic proof of \citet{lee_identifiability_2026} for the original Deng field, this argument is much simpler because sharp normalization uses the same companion kernel in the numerator and denominator. In the companion-elliptic Mat\'ern class of \citet{lee_identifiability_2026}, their companion kernel is exactly our sharp kernel up to scale.

\begin{algorithm}[t]
\caption{\logkde{} Loss (Gaussian)}
\label{alg:our_loss}
\begin{lstlisting}[keepspaces=true]
def log_KDE_loss(x: "[N, D]", y_pos: "[N_pos, D]", y_neg: "[N_neg, D]", T: float):
    dist_pos, dist_neg = cdist(x, y_pos), cdist(x, y_neg.detach())  # compute pairwise distances
    logits_pos, logits_neg = -dist_pos**2 / (2 * T**2), -dist_neg**2 / (2 * T**2)  # compute logits
    logits_neg.fill_diagonal_(-inf)  # exclude diagonal
    log_p = logsumexp(logits_pos, dim=1) - log(N_pos)  # compute log KDEs
    log_q = logsumexp(logits_neg, dim=1) - log(N_neg - 1)
    return (log_q - log_p).mean()  # compute loss
\end{lstlisting}
\end{algorithm}

\paragraph{When does the sharp-normalized field coincide with the drift field?}
\label{sec:fields-coincide}

The sharp-normalized field and the drift field coincide (up to a global constant) if and only if $p_{\mathrm{KDE}}[k^\#](\x) / p_{\mathrm{KDE}}[k](\x)$ is independent of $\x$, which holds when $k^\# \propto k$. For radial kernels $k = \phi(\|\x-\y\|^2)$, this becomes:
\begin{equation}
    \frac{1}{2}\int_{\|\x-\y\|^2}^\infty \phi(r)\, \d r \propto \phi(\|\x-\y\|^2),
\end{equation}
whose unique solution (up to scaling) is $\phi(r) = \exp(-r / 2\sigma^2)$: the Gaussian kernel.

\begin{proposition}
\label{prop:gaussian-unique}
Among radial kernels, the Gaussian is the unique kernel for which the drift field and the sharp-normalized field coincide. It is characterized by the property $k^\# \propto k$.
\end{proposition}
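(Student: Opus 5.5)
The strategy is to reduce coincidence of the two fields to the kernel identity $k^\#\propto k$, and then solve that identity as a first-order ODE for the radial profile. Both fields share the same numerators $\E_{p}[k(\x,\yp)(\yp-\x)]$ and $\E_q[k(\x,\yn)(\yn-\x)]$ and differ only in the normalizers $Z$ versus $Z^\#$.

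\emph{Sufficiency.} If $k^\# = c\,k$ for a constant $c>0$, then $Z_p^\# = c\,Z_p$ and $Z_q^\# = c\,Z_q$ for every $p,q$. Hence $\Vs_{p,q} = c^{-1}\V^\text{drift}_{p,q}$. For the Gaussian, Table~\ref{tab:kernel-examples} (equivalently \cref{eq:sharp-flat-radial}) gives $k^\#=\sigma^2 k$, so the fields coincide up to a global constant, consistent with \Cref{prop:gaussian-kde}.

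\emph{Necessity.} The step I expect to carry the real content is showing that coincidence for all distributions forces $k^\#\propto k$. Require $\Vs_{p,q} = \lambda\,\V^\text{drift}_{p,q}$ for all $p,q$ with a global constant $\lambda$. Take $q$ to be a point mass at some $\y_0$ far from the relevant region, or more simply compare the attractive components separately: choose $q=p$ in one term and isolate $\Vp{p}$. Now specialize to $p=\delta_{\y}$. Then $\Vsp{p}(\x) = k(\x,\y)(\y-\x)/k^\#(\x,\y)$ and $\Vp{p}(\x) = (\y-\x)$. Equality up to $\lambda$ at every $\x\neq\y$ gives $k(\x,\y) = \lambda\,k^\#(\x,\y)$ pointwise. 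The fiddly point is that the drift field is built from both components; to separate them I would use the antisymmetry of the fields and linearity of the numerators in $p$, or simply adopt the reading, stated before the proposition, that coincidence means the ratio $p_{\mathrm{KDE}}[k^\#]/p_{\mathrm{KDE}}[k]$ is $\x$-independent for all $p$. Dirac choices of $p$ then give $k^\#\propto k$ directly.

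\emph{Solving the ODE.} Write $\psi(s):=\tfrac12\int_s^\infty\phi(r)\,\d r$, so that $k^\#=\psi(\|\x-\y\|^2)$ by \cref{eq:sharp-flat-radial}. The condition $\psi = c\,\phi$ with $\psi' = -\tfrac12\phi$ yields $c\,\phi' = -\tfrac12\phi$, hence $\phi(s)=A\exp(-s/(2c))$. Integrability of $\phi$ at infinity, which is required for $k^\#$ to exist, forces $c>0$. Setting $c=\sigma^2$ gives the Gaussian with bandwidth $\sigma$, and this proves uniqueness up to scaling. The differentiability of $\phi$ needed here is automatic: $\phi = c^{-1}\psi$ and $\psi$ is an integral, hence absolutely continuous, which bootstraps to $C^1$.
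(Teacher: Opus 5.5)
Your proposal is correct and follows the paper's route: the paper likewise reduces coincidence to $\x$-independence of $p_{\mathrm{KDE}}[k^\#]/p_{\mathrm{KDE}}[k]$, hence to $k^\#\propto k$, and then asserts that $\tfrac12\int_s^\infty\phi\propto\phi$ has the Gaussian as its unique solution; your Dirac specialization and ODE/regularity argument simply make those two steps explicit. One small slip: setting $q=p$ does not isolate the attractive part (both fields then vanish identically), and neither does placing $q=\delta_{\y_0}$ far away (the normalized repulsive term $\y_0-\x$ does not decay), whereas taking $p=\delta_{\y_1}$, $q=\delta_{\y_0}$ and evaluating at $\x=\y_0$ kills both repulsive terms and gives $\phi(s)=\lambda\psi(s)$ at $s=\|\y_1-\y_0\|^2$.
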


\begin{remark}[Half-integer Mat\'ern ladder]
\Cref{prop:gaussian-unique} sits naturally as the $\nu \to \infty$ endpoint of a smoothness ladder on the half-integer Mat\'ern family: sharp and flat shift $\nu \to \nu \pm 1$ (\Cref{cor:matern-half-shift}), with the Laplacian ($\nu = \tfrac{1}{2}$) as the lower boundary where the flat side becomes singular (\Cref{corr:laplace_flat_sharp_proof}). The Mat\'ern family thus interpolates between the Laplacian and Gaussian cases.
\end{remark}

\subsection{Comparing the Original and Sharp-Normalized Fields}
Having established that sharp normalization restores conservatism, we now characterize how it differs from the original drift field and relate both to MMD-based training.
 The vanilla $\V_{p,q}[k]$, MMD-induced $\V^\text{MMD}_{p,q}[k^\#]$, and \logkde{}-induced field $\V^\#_{p,q}$ share the same direction-inducing terms $\E_{\yp \sim p}[k(\x, \yp)(\yp - \x)]$ and $\E_{\yn \sim q}[k(\x, \yn)(\yn - \x)]$, differing only in their normalizations. Because the positive and negative components are rescaled independently, the composite fields generally differ in both magnitude \emph{and} direction (i.e., $\V^\text{MMD}_{p,q}[k^\#] \not\propto \V_{p,q}[k] \not\propto \V^\#_{p,q}$). \Cref{fig:normalization-ratio} shows the discrepancy in magnitude is most pronounced far from the data.

\begin{figure}[t]
    \centering
    \includegraphics{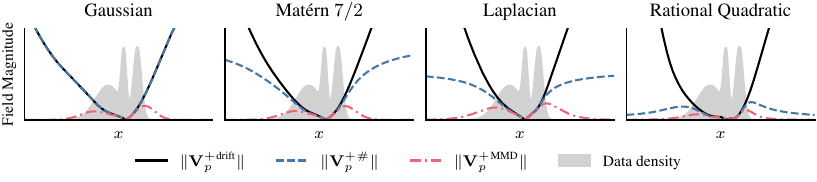}
    \caption{\textbf{Magnitudes of positive drift fields.} Given a fixed target distribution (gray shading), we show the magnitude of the original drift field~(black), sharp-normalized drift field~(dashed), and the MMD gradient~(dash-dotted) as function of $x$. The MMD gradient decays fast in the tails while the drift field grows unboundedly. The sharp-normalized field lies between these extremes, with Mat\'ern-$7/2$ showing intermediate behavior between the Laplacian and Gaussian/RQ panels.}
    \label{fig:normalization-ratio}
\end{figure}

 \paragraph{Interpretation.} The three fields realize three levels of normalization:
\textbf{1.)} \emph{None.} $\V_{p,q}^\text{unnorm.}[k]$ is the gradient of $\Pot_{p, \sg(q_\theta)}^\textnormal{MMD}[k^\#]$---hence conservative---but vanishes far from data.
\textbf{2.)} \emph{$Z_p, Z_q$.} The drift field $\V_{p,q}[k]$ of \citet{deng_generative_2026} fixes the vanishing-signal problem but is not, in general, a gradient.
 \textbf{3.)} \emph{Sharp ($Z_p^\#, Z_q^\#$).} $\V^\#_{p,q}$ is the gradient of the \logkde{} loss~\eqref{eq:sharp-loss} (hence conservative) and shows improved tail behavior over the MMD gradient (e.g., bounded signal for the Laplacian kernel).

\section{Experiments}
\label{sec:experiments}
We conduct two sets of experiments on generative image modeling to evaluate our proposed \logkde{} loss. First, we assess pixel-space image generation on MNIST~\citep{lecun2002gradient} and Fashion-MNIST~\citep{xiao2017fashion} in a controlled setting, deliberately omitting several enhancements introduced by \citet{deng_generative_2026} that are equally applicable to any similarly structured objective. Second, we evaluate the \logkde{} objective on the more challenging task of latent-space image generation on ImageNet~\citep{imagenet2009}, using the original codebase of \citeauthor{deng_generative_2026}\footnote{\href{https://github.com/lambertae/drifting}{github.com/lambertae/drifting}} Full configuration details for all experiments are provided in \cref{tab:exp_configs}.

\paragraph{Pixel-space generation on MNIST and Fashion-MNIST.}
Following \citet{deng_generative_2026}, we train DiT-S/2~\citep{peebles2023scalable} models with register tokens and style-embedding tokens to directly map noise to pixel images using AdamW~\citep{loshchilov2017decoupled}. Models are trained for 12{,}000 steps on MNIST and 16{,}000 steps on the more challenging Fashion-MNIST, with all loss computations performed in pixel space. Unlike \citeauthor{deng_generative_2026}, we do not normalize features and distances per batch. Instead, we maintain, per class, an exponential moving average~($\beta{=}0.99$) of the mean $\ell_2$-distance from each data sample to its $k$-nearest neighbors~($k{=}5$) within a batch, and divide each true and generated sample by this scalar to achieve scale-invariant kernel widths. Furthermore, we employ a single kernel width, omit classifier-free guidance~(CFG), and omit the drifting-field normalization, as there is no need to balance multiple objectives. All hyperparameters are fixed across experiments; only the training objective varies.

\paragraph{Latent-space generation on ImageNet.}
Generating high-fidelity images on ImageNet requires an additional feature extractor, whose specific design can substantially influence performance~\citep[cf.][Table~3]{deng_generative_2026}. To minimize such confounding factors, we integrate the \logkde{} objective directly into the codebase of \citeauthor{deng_generative_2026}, replacing only the loss function while keeping all other components unchanged. We adopt the \emph{ablation default} configuration, training a DiT-B/2 model to generate outputs in the latent space of a pretrained SD-VAE~\citep{rombach2022highres, stabilityai/sd-vae-ft-mse} and computing the loss in the latent space of a pretrained latent-MAE-256 model~\citep{deng_generative_2026}. We retain the same batch-wise feature normalization strategy. %
We incorporate CFG into the \logkde{} objective in \cref{alg:our_loss} via a weighted \emph{logsumexp} expression to evaluate $\log_q(\x) = \log \E_{\yn \sim \sg(\hat{q}_\theta)}\left[k^\#(\yn, \x)\right]$, where $\hat{q}_\theta$ is a mixture of $q_\theta$ and the unconditional data distribution $p(\cdot | \varnothing)$. We find that CFG is essential for achieving competitive results.\looseness-1

\paragraph{Corrected drifting objective.}
We observe that the implementation of \citeauthor{deng_generative_2026} does not compute the drifting field as defined in \cref{eq:drift-field-components}. Instead, it computes
\begin{align}
\bar{\V}_{p,q}(\x) = \frac{1}{(Z_p(\x) + Z_q(\x))^2}\E_{p,q}\left[k(\x,\yp)k(\x,\yn)(\yp - \yn)\right].
\end{align}
We refer to \cref{app:wrong_drifting} for a detailed derivation. In our pixel-space experiments, we therefore compare three objectives: the drifting objective actually used in \citeauthor{deng_generative_2026}, a corrected variant that evaluates the field exactly as in \cref{eq:drift-field-components}, and the \logkde{} objective. We denote these as \emph{Drifting-X}, \emph{Naive-X}~(\cref{alg:correct_drifting}), and \emph{\logkde-X}~(\cref{alg:our_loss}), respectively, where \emph{X} identifies the kernel.

\subsection{Results}

\begin{figure}[t]
    \centering
    \includegraphics{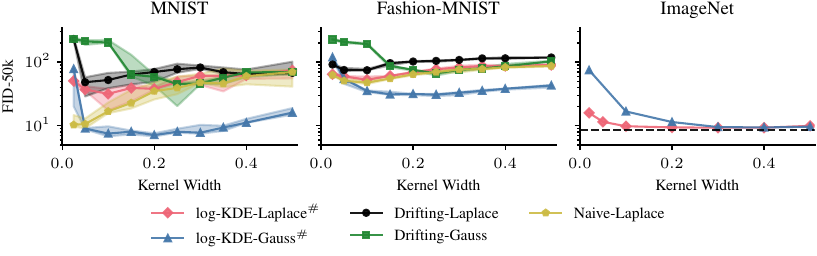}\vspace{-.5em}
    \caption{\textbf{Kernel width ablation.} The Fréchet Inception Distance (FID-50k, lower is better) as a function of kernel width for MNIST (left), Fashion-MNIST (center), and ImageNet (right). Solid lines indicate median, shaded areas indicate minimum and maximum over three seeds (ImageNet: one seed). The black dashed line indicates the best FID~(8.67) on ImageNet reported by \citeauthor{deng_generative_2026} using a single scale.} %
    \label{fig:fid}
\end{figure}

\Cref{fig:fid} reports the Fréchet Inception Distance~\citep{heusel2017gans} evaluated over $50{,}000$ samples (FID-50k) as a function of kernel width; \Cref{tab:quant_results} gives the best FID and corresponding F1 per objective, with precision and recall following \citet{kynkaanniemi2019improved}. Uncurated samples are in \cref{app:additional_results}.

Across all three datasets, the scalar \logkde{} loss matches or beats the drifting field in FID. The gap is largest on MNIST, where \rbfsharp{} and \laplsharp{} reach 7.21 and 29.12 versus 37.62 and 45.02 for their drifting counterparts. The corrected \emph{Naive} implementation also generalizes to smaller kernel widths and consistently outperforms the original implementation of \citeauthor{deng_generative_2026}

On ImageNet, both \logkde{} variants perform similarly, with $\text{Laplace}^\#$ tolerating smaller widths. Both require larger bandwidths~($\sigma \in [0.2, 0.4]$) than the drifting objective~($\sigma \in [0.05, 0.2]$); whether this is intrinsic to \logkde{} or due to numerical instability at small widths remains unresolved.

Taken together, these results support our central claim: the scalar \logkde{} objective is a simple yet effective alternative to drifting, and the non-conservative expressiveness of the drifting field is not required for high-quality generation.

\begin{table}[t]
    \centering
    \caption{\textbf{Best FID-50k ($\downarrow$) and F1 ($\uparrow$).} For each objective, we report the FID and F1 scores (mean and standard deviation over 3 seeds, ImageNet: one seed) for the best kernel bandwidth $\sigma$. For \emph{\driftlapl} on ImageNet, we report the FID as stated in \citeauthor{deng_generative_2026}}
    \label{tab:quant_results}
    \tablestyle{6pt}{1.02}
    \adjustbox{max width=\linewidth}{
    \begin{tabular}{@{}c@{\ \ \ }l | c c@{\ \ \ }c | c c@{\ \ \ }c | c c@{\ \ \ }c@{}}
    \toprule
              & & \multicolumn{3}{c}{MNIST} & \multicolumn{3}{c}{Fashion-MNIST} & \multicolumn{3}{c@{}}{ImageNet (ablation)} \\
    \midrule
    Kernel & Objective & FID $\downarrow$ & F1 $\uparrow$ & $\sigma$ & FID $\downarrow$ & F1 $\uparrow$ & $\sigma$ & FID $\downarrow$ & F1 $\uparrow$ & $\sigma$ \\
    \midrule
    \multirow{2}{*}{\textbf{Gauss}}
    & \rbfsharp{}       & \textbf{7.21}{\tiny\,$\pm$\,0.42} & \textbf{0.56}{\tiny\,$\pm$\,0.01} & 0.2 & \textbf{30.57}{\tiny\,$\pm$\,1.20} & \textbf{0.21}{\tiny\,$\pm$\,0.01} & 0.25 & 9.43 & 0.53 & 0.4 \\
    & \driftrbf           & 37.62{\tiny\,$\pm$\,14.97} & 0.11{\tiny\,$\pm$\,0.15} & 0.25 & 64.87{\tiny\,$\pm$\,3.03} & 0.05{\tiny\,$\pm$\,0.01} & 0.25 & $-$ & $-$ & $-$ \\
    \midrule
    \multirow{3}{*}{\textbf{Laplace}}
    & \laplsharp          & 29.12{\tiny\,$\pm$\,10.57} & 0.18{\tiny\,$\pm$\,0.16} & 0.1 & 54.57{\tiny\,$\pm$\,4.33} & 0.07{\tiny\,$\pm$\,0.02} & 0.1 & 9.35 & 0.50 & 0.4 \\
    & \naivedriftlapl     & 10.95{\tiny\,$\pm$\,1.97} & 0.51{\tiny\,$\pm$\,0.03} & 0.05 & 48.65{\tiny\,$\pm$\,4.00} & 0.12{\tiny\,$\pm$\,0.02} & 0.1 & $-$ & $-$ & $-$ \\
    & \driftlapl~\citep{deng_generative_2026} & 45.02{\tiny\,$\pm$\,14.75} & 0.05{\tiny\,$\pm$\,0.08} & 0.05 & 75.69{\tiny\,$\pm$\,2.25} & 0.02{\tiny\,$\pm$\,0.00} & 0.1 & \textbf{8.67} & n/a & 0.05 \\
    \bottomrule
    \end{tabular}
    }
    
\end{table}

\section{Related Work}
\label{sec:related_work}
\paragraph{Gradient flows and score transport.}
The variational view of probability transport as Wasserstein gradient flow goes back to the JKO formulation of the Fokker--Planck equation as KL descent in Wasserstein space~\citep{jordan_variational_1998}. In this view, particles move with score-difference velocity $\nabla\log p-\nabla\log q_t$, the same object underlying score-based generative modeling and denoising score matching~\citep{vincent_connection_2011}. Kernelized particle flows have also been studied for integral probability metrics, notably MMD gradient flows~\citep{gretton2006kernel, arbel_mmd_2019}. Our sharp formulation connects drifting to this family by showing that the drifting numerator is the gradient of a sharp KDE. %

\paragraph{Reinterpretations of drifting.}
Concurrent work connects Gaussian drifting to Wasserstein gradient flows of KDE-approximated divergences~\citep{cao_gradient_2026}, smoothed score matching via Tweedie's formula~\citep{lai_unified_2026}, and Fourier-space training dynamics~\citep{turan_generative_2026}.
Our flat-sharp formalism explains why these reinterpretations all hinge on the Gaussian: it is the unique radial kernel satisfying $k^\#\propto k$, so its drifting field already coincides with the score difference between KDEs. For other kernels this coincidence breaks down, and sharp normalization is precisely what restores it.

\paragraph{Companion kernels and identifiability.}
\citet{lee_identifiability_2026} study identifiability and stability of the original drifting field for companion-elliptic kernel families, including Mat\'ern kernels. In this class, their companion kernel coincides with our sharp kernel up to scale. Our sharp-normalized formulation makes the companion role explicit and yields a simple equilibrium-identifiability argument, since the same kernel appears in both numerator and denominator of the KDE log-ratio. Identifiability for a different normalization scheme—a two-sided Sinkhorn variant—is established by \citet{he_sinkhorn_2026}.

\section{Conclusion}
\label{sec:conclusion}
We have shown that drift fields are not generally conservative: the position-dependent normalization produces transversal magnitude variations for all radial kernels except the Gaussian. 
The obstruction is a denominator mismatch.
The drifting numerator is the gradient of a KDE built from the sharp kernel $k^\#$, but the original normalization divides by the KDE of the unsharp kernel $k$.
Sharp normalization fixes this mismatch by dividing by the corresponding sharp KDE, thereby completing the KDE score and restoring the score-difference  form $\nabla\log p_{\mathrm{KDE}}[k^\#]-\nabla\log q_{\mathrm{KDE}}[k^\#]$. 
This turns drifting into a conservative KDE-smoothed score transport with scalar \logkde{} objective. It also provides exact equilibrium identifiability and extends the Gaussian score-flow interpretation of drifting to non-Gaussian radial kernels.
Our experiments confirm that sharp normalization performs comparably to the original drifting objective, suggesting that the non-conservative component introduced by the original denominator is not required for high-quality generation.

\paragraph{Limitations and future work.}
Our population-level theory does not directly imply convergence for practical training with minibatch KDE estimates, finite-batch optimizers, neural generator parameterizations, feature-space losses, and classifier-free guidance---a limitation shared by related work on drifting models. The cleanest theory also assumes radial kernels with suitable sharp companions; the denoising interpretation requires positive, integrable $k^\#$, and performance remains bandwidth-sensitive. More broadly, the score-flow view suggests that drifting may benefit from ideas developed for Wasserstein gradient flows, denoising score matching, and score-based generative modeling, including improved objectives, schedules, and algorithms.

\FloatBarrier

{
\bibliographystyle{icml2026_titleurl}
\bibliography{references}

@article{deng_generative_2026,
      title={Generative Modeling via Drifting}, 
      author={Mingyang Deng and He Li and Tianhong Li and Yilun Du and Kaiming He},
      journal={arXiv preprint arXiv:2602.04770},
      year={2026},
      eprint={2602.04770},
      archivePrefix={arXiv},
      primaryClass={cs.LG},
      url={https://arxiv.org/abs/2602.04770}, 
}

@inproceedings{gretton2006kernel,
 author = {Gretton, Arthur and Borgwardt, Karsten and Rasch, Malte and Sch\"{o}lkopf, Bernhard and Smola, Alex},
 booktitle = {Advances in Neural Information Processing Systems},
 pages = {},
 title = {A Kernel Method for the Two-Sample-Problem},
 url = {https://proceedings.neurips.cc/paper_files/paper/2006/file/e9fb2eda3d9c55a0d89c98d6c54b5b3e-Paper.pdf},
 volume = {19},
 year = {2006}
}

@article{jordan_variational_1998,
  author = {Jordan, Richard and Kinderlehrer, David and Otto, Felix},
  title = {The Variational Formulation of the Fokker-Planck Equation},
  journal = {SIAM Journal on Mathematical Analysis},
  year = {1998},
  volume = {29},
  number = {1},
  pages = {1--17},
  doi = {10.1137/S0036141096303359},
  url = {https://doi.org/10.1137/S0036141096303359},
}

@article{vincent_connection_2011,
  author = {Vincent, Pascal},
  title = {A Connection Between Score Matching and Denoising Autoencoders},
  journal = {Neural Computation},
  year = {2011},
  volume = {23},
  number = {7},
  pages = {1661--1674},
  doi = {10.1162/NECO_a_00142},
  url = {https://doi.org/10.1162/NECO_a_00142},
}

@inproceedings{arbel_mmd_2019,
  author = {Arbel, Michael and Korba, Anna and Salim, Adil and Gretton, Arthur},
  title = {Maximum Mean Discrepancy Gradient Flow},
  booktitle = {Advances in Neural Information Processing Systems},
  year = {2019},
  volume = {32},
  url = {https://proceedings.neurips.cc/paper/2019/hash/944a5ae3483ed5c1e10bbccb7942a279-Abstract.html},
}

@article{xiao2017fashion,
      title={{Fashion-MNIST}: a Novel Image Dataset for Benchmarking Machine Learning Algorithms}, 
      author={Han Xiao and Kashif Rasul and Roland Vollgraf},
      journal={arXiv preprint arXiv:1708.07747},
      year={2017},
      eprint={1708.07747},
      archivePrefix={arXiv},
      primaryClass={cs.LG},
      url={https://arxiv.org/abs/1708.07747}, 
}

@article{lecun2002gradient,
  author={Lecun, Y. and Bottou, L. and Bengio, Y. and Haffner, P.},
  journal={Proceedings of the IEEE}, 
  title={Gradient-based learning applied to document recognition}, 
  year={1998},
  volume={86},
  number={11},
  pages={2278-2324},
  doi={10.1109/5.726791},
  url={https://ieeexplore.ieee.org/document/726791}
}

@inproceedings {peebles2023scalable,
author = { Peebles, William and Xie, Saining },
booktitle = { 2023 IEEE/CVF International Conference on Computer Vision (ICCV) },
title = {Scalable Diffusion Models with Transformers},
year = {2023},
volume = {},
ISSN = {},
pages = {4172-4182},
doi = {10.1109/ICCV51070.2023.00387},
url = {https://doi.ieeecomputersociety.org/10.1109/ICCV51070.2023.00387},
}

@inproceedings{
loshchilov2017decoupled,
title={Decoupled Weight Decay Regularization},
author={Ilya Loshchilov and Frank Hutter},
booktitle={International Conference on Learning Representations},
year={2019},
url={https://openreview.net/forum?id=Bkg6RiCqY7},
}

@inproceedings{kynkaanniemi2019improved,
 author = {Kynk\"{a}\"{a}nniemi, Tuomas and Karras, Tero and Laine, Samuli and Lehtinen, Jaakko and Aila, Timo},
 booktitle = {Advances in Neural Information Processing Systems},
 pages = {},
 title = {Improved Precision and Recall Metric for Assessing Generative Models},
 url = {https://proceedings.neurips.cc/paper_files/paper/2019/file/0234c510bc6d908b28c70ff313743079-Paper.pdf},
 volume = {32},
 year = {2019}
}

@inproceedings{heusel2017gans,
 author = {Heusel, Martin and Ramsauer, Hubert and Unterthiner, Thomas and Nessler, Bernhard and Hochreiter, Sepp},
 booktitle = {Advances in Neural Information Processing Systems},
 pages = {},
 title = {{GANs} Trained by a Two Time-Scale Update Rule Converge to a Local Nash Equilibrium},
 url = {https://proceedings.neurips.cc/paper_files/paper/2017/file/8a1d694707eb0fefe65871369074926d-Paper.pdf},
 volume = {30},
 year = {2017}
}

@inproceedings{dhariwal2021diffusion,
 author = {Dhariwal, Prafulla and Nichol, Alexander},
 booktitle = {Advances in Neural Information Processing Systems},
 pages = {8780--8794},
 title = {Diffusion Models Beat GANs on Image Synthesis},
 url = {https://proceedings.neurips.cc/paper_files/paper/2021/file/49ad23d1ec9fa4bd8d77d02681df5cfa-Paper.pdf},
 volume = {34},
 year = {2021}
}

@article{stabilityai/sd-vae-ft-mse,
  url={https://huggingface.co/stabilityai/sd-vae-ft-mse},
  title={sd-vae-ft-mse},
  journal={Hugging Face},
  editor={{Hugging Face}},
  author={{Stability AI}},
  year={2022},
}

@article{rombach2022highres,
      title={High-Resolution Image Synthesis with Latent Diffusion Models}, 
      author={Robin Rombach and Andreas Blattmann and Dominik Lorenz and Patrick Esser and Björn Ommer},
      year={2022},
      eprint={2112.10752},
      archivePrefix={arXiv},
      primaryClass={cs.CV},
      url={https://arxiv.org/abs/2112.10752},
      journal={arXiv preprint arXiv:2112.10752}
}

@INPROCEEDINGS{imagenet2009,
  author={Deng, Jia and Dong, Wei and Socher, Richard and Li, Li-Jia and Kai Li and Li Fei-Fei},
  booktitle={2009 IEEE Conference on Computer Vision and Pattern Recognition}, 
  title={{ImageNet}: A large-scale hierarchical image database}, 
  year={2009},
  volume={},
  number={},
  pages={248-255},
  doi={10.1109/CVPR.2009.5206848},
  url={https://ieeexplore.ieee.org/document/5206848}
}

@article{cao_gradient_2026,
    title = {Gradient Flow Drifting: Generative Modeling via Wasserstein Gradient Flows of {KDE}-Approximated Divergences},
    url = {http://arxiv.org/abs/2603.10592},
    shorttitle = {Gradient Flow Drifting},
    journal = {arXiv preprint arXiv:2603.10592},
    publisher = {{arXiv}},
    author = {Cao, Jiarui and Wei, Zixuan and Liu, Yuxin},
    urldate = {2026-03-20},
    year = {2026},
    date = {2026-03-11},
    langid = {english},
    eprinttype = {arxiv},
    eprint = {2603.10592 [cs]},
}

@article{lai_unified_2026,
    title = {A Unified View of Drifting and Score-Based Models},
    url = {http://arxiv.org/abs/2603.07514},
    publisher = {{arXiv}},
    journal = {arXiv preprint arXiv:2603.07514},
    author = {Lai, Chieh-Hsin and Nguyen, Bac and Murata, Naoki and Takida, Yuhta and Uesaka, Toshimitsu and Mitsufuji, Yuki and Ermon, Stefano and Tao, Molei},
    urldate = {2026-03-24},
    date = {2026-03-19},
    year = {2026},
    langid = {english},
    eprinttype = {arxiv},
    eprint = {2603.07514 [cs]},
}

@article{turan_generative_2026,
    title = {Generative Drifting is Secretly Score Matching: a Spectral and Variational Perspective},
    url = {http://arxiv.org/abs/2603.09936},
    shorttitle = {Generative Drifting is Secretly Score Matching},
    journal = {arXiv preprint arXiv:2603.09936},
    publisher = {{arXiv}},
    author = {Turan, Erkan and Ovsjanikov, Maks},
    urldate = {2026-05-01},
    date = {2026-03-10},
    year = 2026,
    langid = {english},
    eprinttype = {arxiv},
    eprint = {2603.09936 [cs]},
}

@article{lee_identifiability_2026,
    title = {Identifiability and Stability of Generative Drifting with Companion-Elliptic Kernel Families},
    url = {https://arxiv.org/abs/2604.24196},
    journal = {arXiv preprint arXiv:2604.24196},
    publisher = {{arXiv}},
    author = {Lee, Hak Geun},
    year = {2026},
    date = {2026-04-27},
    eprinttype = {arxiv},
    eprint = {2604.24196 [stat.ML]},
}

@article{he_sinkhorn_2026,
    title = {Sinkhorn-Drifting Generative Models},
    url = {https://arxiv.org/abs/2603.12366},
    journal= {arXiv preprint arXiv:2603.12366},
    publisher = {{arXiv}},
    author = {He, Pengfei and Khangaonkar, Omkar and Pirsiavash, Hamed and Bai, Yu and Kolouri, Soheil},
    year = {2026},
    eprinttype = {arxiv},
    eprint = {2603.12366 [cs.LG]},
}

@inproceedings{paszke_pytorch_2019,
 author = {Paszke, Adam and Gross, Sam and Massa, Francisco and Lerer, Adam and Bradbury, James and Chanan, Gregory and Killeen, Trevor and Lin, Zeming and Gimelshein, Natalia and Antiga, Luca and Desmaison, Alban and Kopf, Andreas and Yang, Edward and DeVito, Zachary and Raison, Martin and Tejani, Alykhan and Chilamkurthy, Sasank and Steiner, Benoit and Fang, Lu and Bai, Junjie and Chintala, Soumith},
 booktitle = {Advances in Neural Information Processing Systems},
 pages = {},
 title = {{PyTorch}: An Imperative Style, High-Performance Deep Learning Library},
 url = {https://proceedings.neurips.cc/paper_files/paper/2019/file/bdbca288fee7f92f2bfa9f7012727740-Paper.pdf},
 volume = {32},
 year = {2019}
}

@misc{jax2018github,
  author = {James Bradbury and Roy Frostig and Peter Hawkins and Matthew James Johnson and Yash Katariya and Chris Leary and Dougal Maclaurin and George Necula and Adam Paszke and Jake Vander{P}las and Skye Wanderman-{M}ilne and Qiao Zhang},
  title = {{JAX}: composable transformations of {P}ython+{N}um{P}y programs},
  url = {http://github.com/jax-ml/jax},
  version = {0.3.13},
  year = {2018},
}

@article{kingma2017adam,
      title={Adam: A Method for Stochastic Optimization}, 
      author={Diederik P. Kingma and Jimmy Ba},
      journal={arXiv preprint arXiv:1412.69800},
      year={2017},
      eprint={1412.6980},
      archivePrefix={arXiv},
      primaryClass={cs.LG},
      url={https://arxiv.org/abs/1412.6980}, 
}

@INPROCEEDINGS{SonZhaSmoGreetal08,
  author = {L. Song and X. Zhang and A. J. Smola and A. Gretton and B. Sch{\"o}lkopf},
  title = {Tailoring density estimation via reproducing kernel moment matching},
  booktitle = {Proceedings of the 25th International Conference on Machine Learning},
  year = {2008},
  editor = {W. W. Cohen and A. McCallum and S. Roweis},
  pages = {992-999},
  address = {New York},
  publisher = {ACM Press},
  location = {Helsinki, Finland},
  url = {http://icml2008.cs.helsinki.fi/papers/icml2008proceedings.pdf}
}

@article{SimSch18,
  title = {Kernel Distribution Embeddings: Universal Kernels, Characteristic Kernels and Kernel Metrics on Distributions},
  author = {Simon-Gabriel, C. J. and Sch{\"o}lkopf, B.},
  url = {https://jmlr.csail.mit.edu/papers/volume19/16-291/16-291.pdf},
  journal = {Journal of Machine Learning Research},
  volume = {19},
  number = {44},
  pages = {1--29},
  year = {2018}
}

@article{ho2022classifierfree,
      title={Classifier-Free Diffusion Guidance}, 
      author={Jonathan Ho and Tim Salimans},
      journal={arXiv preprint arXiv:2207.12598},
      year={2022},
      eprint={2207.12598},
      archivePrefix={arXiv},
      primaryClass={cs.LG},
      url={https://arxiv.org/abs/2207.12598}, 
}
}

\FloatBarrier
\newpage

\appendix
\section*{Impact Statement}\label{sec:impact}
This work is a theoretical analysis of training objectives for generative models, 
specifically reinterpreting and improving upon the drifting framework through the lens 
of conservative vector fields and kernel density estimation. It introduces no new 
datasets, large-scale pre-trained models, or deployment-ready systems.

\paragraph{Positive impacts.} By establishing a scalar \logkde{} loss equivalent to the 
drifting field, this work simplifies the implementation of one-step generative models 
and brings them closer to well-understood frameworks (Wasserstein gradient flows, score 
matching). This may lower the barrier for researchers to study, analyze, and improve 
such models, fostering scientific transparency and reproducibility.

\paragraph{Negative impacts.} Improvements to generative image models could, in 
principle, facilitate the creation of synthetic media (deepfakes) used for 
misinformation, fraud, or non-consensual imagery. However, this risk is indirect and 
shared with the broader generative modeling literature. The models trained in our 
experiments are small, applied to standard benchmarks (MNIST, Fashion-MNIST, ImageNet), 
and are substantially weaker than publicly available generative systems; we do not 
release pre-trained model weights.

\section*{Acknowledgments}
We kindly thank \citeauthor{deng_generative_2026} for making their codebase publicly available, allowing us to exactly replicate their experimental setting.

We greatly thank the Max Planck Computing and Data Facility and the Tübingen Machine Learning Cloud, DFG FKZ INST 37/1057-1 FUGG, for providing the compute resources for this research.
 
Funded by the European Union (REAL-RL, \href{https://doi.org/10.3030/101045454}{101045454}, WeatherGenerator, \href{https://doi.org/10.3030/101187947}{10118794}, and ANUBIS, \href{https://doi.org/10.3030/101123955}{101123955}). Views and opinions expressed are, however, those of the authors only and do not necessarily reflect those of the European Union or the European Research Council. Neither the European Union nor the granting authority can be held responsible for them.
 
This work was supported by the German Federal Ministry of Education and Research (BMBF): Tübingen AI Center, FKZ: 01IS18039A. Georg Martius is a member of the Machine Learning Cluster of Excellence, EXC number 2064/1 – Project number 390727645. The authors thank the International Max Planck Research School for Intelligent Systems (IMPRS-IS) for supporting Tim Weiland.

\section{Conservatism of the Drifting Field}

\subsection{Curl and Path Independence}
\begin{definition}[Curl]
\label{def:curl}
Let $h = \frac{n(n-1)}{2}$. The \textbf{curl} of a vector field $\V: \R^n \to \R^n$ is a map $\nabla \times \V: \R^n \to \R^h$ with
\begin{equation*}
    [\nabla \times \V]_{\idx(i,j)} :=
    (-1)^{i+j}\left(\frac{\del\V_i}{\del x_j} - \frac{\del\V_j}{\del x_i}\right) \quad \text{for} \quad 1 \leq i < j \leq n,
\end{equation*}
where $\idx: \{(i,j) \mid 1 \leq i < j \leq n\} \to \{1, \ldots, h\}$ is given by
\begin{equation*}
    \idx(i,j) = \frac{(n-i)(n-i-1)}{2} + n - j + 1.
\end{equation*}
In particular, for $n=2$, we recover the scalar-valued curl $\frac{\del \V_2}{\del x_1} - \frac{\del \V_1}{\del x_2}$.
\end{definition}

\paragraph{Proof for the Jacobian symmetry requirement.}
\label{app:conservative-proof}

We give a self-contained proof for Lemma~\ref{lem:jacobian_conservative}, i.e.

\textit{A vector field $\V: \R^n \to \R^n$ is conservative if and only if its Jacobian is symmetric.}

\begin{proof}
\underline{$(\Rightarrow)$:} If $\V = \nabla \Pot$, then $\frac{\del \V_i}{\del x_j} = \frac{\del^2 \Pot}{\del x_i \del x_j} = \frac{\del^2 \Pot}{\del x_j \del x_i} = \frac{\del \V_j}{\del x_i}$ by Schwarz's theorem.

\underline{$(\Leftarrow)$:} Assume the Jacobian of $\V$ is symmetric. Define $\Pot(\x) := \int_0^1 \V(t\x)^\top \x \, \d t$. Then:
\begin{align*}
    \frac{\del\Pot}{\del x_i} &= \int_0^1 \left(
        \V_i(t\x) + \sum_{j=1}^n \frac{\del \V_j(t\x)}{\del x_i} x_j
    \right) \d t \\
    &= \int_0^1 \left(
        \V_i(t\x) + \sum_{j=1}^n t\, \frac{\del \V_i(\x)}{\del x_j}\bigg|_{\x=t\x} x_j
    \right) \d t \\
    &= \int_0^1 \left(
        \V_i(t\x) + t\, \frac{\d}{\d t} \V_i(t\x)
    \right) \d t
    = \int_0^1 \frac{\d}{\d t}\bigl[t\, \V_i(t\x)\bigr] \d t
    = \V_i(\x),
\end{align*}
where the second line uses the chain rule $\frac{\del\V_j(t\x)}{\del x_i}= t\, \frac{\del \V_j(\x)}{\del x_i}\bigg|_{\x=t\x}$ and the symmetry assumption $\frac{\del \V_j}{\del x_i} = \frac{\del \V_i}{\del x_j}$. Thus $\nabla \Pot = \V$.
\end{proof}
\subsection{Relation of conservatism of $\V^\text{unnorm.}_{p,q}$ and $\Vp{p}^\text{unnorm.}$ and $\Vn{q}^\text{unnorm.}$}
\label{sec:relation_conservative}

We show formally that conservatism of the composite unnormalized field
\begin{align*}
    \V^\text{unnorm.}_{p,q}[k](\x) &= \Vp{p}^\text{unnorm.}[k](\x) - \Vn{q}^\text{unnorm.}[k](\x),\\
    \quad \text{with} \quad
    \Vp{p}^\text{unnorm.}[k](\x) &:= \E_{\yp\sim p}[k(\x,\yp)(\yp-\x)],
\end{align*}
for all $p, q \in \P(\R^n)$ implies that each subcomponent must be individually conservative. The proof relies on the following lemma:

\begin{lemma}\label{lem:curl_invariant_zero}
    If the curl $\nabla\times\Vp{p}^\text{unnorm.}[k]$ is invariant to $p$, then $\nabla\times\Vp{p}^\text{unnorm.}[k] = \mathbf{0}$ for all $p \in \P(\R^n)$.
\end{lemma}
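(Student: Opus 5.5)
The plan is to evaluate the curl at a distribution where it obviously vanishes, and then use $p$-invariance to transfer that value to every $p$. The natural choice is a Dirac measure $p=\delta_{\y}$, for which the field is $\Vp{p}^\text{unnorm.}[k](\x) = k(\x,\y)(\y-\x)$. In general this need not be curl-free, so we first need a $p$ for which the field is trivial.

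\textbf{Step 1 (a distribution with zero field).} The cleanest way to kill the field is to let the mass escape. For a Dirac measure, the curl entries are
\begin{align*}
\partial_j\bigl[k(\x,\y)(y_i-x_i)\bigr]-\partial_i\bigl[k(\x,\y)(y_j-x_j)\bigr] = \partial_j k\,(y_i-x_i)-\partial_i k\,(y_j-x_j),
\end{align*}
because the terms coming from differentiating $(\y-\x)$ cancel by symmetry. This is a $\y$-dependent function. Taking $p=\delta_{\y}$ and $p=\delta_{\y'}$, invariance gives that this expression is independent of $\y$ for each fixed $\x$.

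\textbf{Step 2 (use linearity in $p$).} The map $p\mapsto \nabla\times\Vp{p}^\text{unnorm.}[k]$ is affine, in fact linear under mixtures: for $p=\lambda p_1+(1-\lambda)p_2$,
\begin{align*}
\nabla\times\Vp{p}^\text{unnorm.} = \lambda\,\nabla\times\Vp{p_1}^\text{unnorm.} + (1-\lambda)\,\nabla\times\Vp{p_2}^\text{unnorm.},
\end{align*}
assuming we may differentiate under the expectation. Linearity alone does not force the common value $C(\x)$ to be zero, since constant-on-mixtures is consistent with it. So I will extend the map linearly to signed measures, or more simply to the zero measure. $\Vp{p}^\text{unnorm.}$ is defined by the same expectation formula for any finite measure $\mu$ and is linear in $\mu$. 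Set $F(\mu):=\nabla\times\Vp{\mu}^\text{unnorm.}$. Then $F(\delta_{\y})=C$ for all $\y$, so $F(\delta_{\y}-\delta_{\y'})=0$. Writing $C=F(\delta_{\y})$ alone is not enough, so I need one further input.

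\textbf{Step 3 (pin the constant to zero).} This is the main obstacle. I would take $\y\to\infty$ (say $\|\y\|\to\infty$ along a ray) with $\x$ fixed and use decay of the kernel: for kernels with $k(\x,\y)\to0$ and $\nabla_\x k(\x,\y)\,\|\y-\x\|\to 0$ as $\|\y\|\to\infty$, the Dirac curl expression from Step 1 tends to zero. Since it is constant in $\y$, it equals zero. Hence $C\equiv 0$, and by linearity over mixtures, and more generally over all $p$ via integration of the Dirac case, $\nabla\times\Vp{p}^\text{unnorm.}[k]=\mathbf{0}$ for every $p$. If the decay assumption is unavailable, my fallback is to evaluate at $\y=\x$ instead. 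There $(\y-\x)=0$, so the Dirac curl expression vanishes at the point $\x=\y$. Constancy in $\y$ for fixed $\x$ then gives $C(\x)=0$ directly, provided $k$ is differentiable at the diagonal. This fallback is likely the intended, simpler route, and it needs no decay at all.
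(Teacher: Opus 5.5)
Your fallback in Step 3 is exactly the paper's proof. Choosing $p=\delta_{\x}$ makes the factor $(\y-\x)$ vanish, so the curl is zero at $\x$, and $p$-invariance transfers this zero to every $p$ and every $\x$. The decay-at-infinity route and the signed-measure detour in Step 2 are unnecessary, and the decay route also adds hypotheses that are not in the statement, so lead with the fallback. Note too that continuity of $k$ at the diagonal already suffices there, not differentiability, since the field $\x\mapsto k(\x,\y)(\y-\x)$ has Jacobian $-k(\y,\y)I$ at $\x=\y$; this also covers the Laplacian kernel.
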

\begin{proof}
    Choose $p = \delta(\yp - \y_0)$, so $\Vp{p}^\text{unnorm.}[k](\x) = k(\x,\y_0)(\y_0 - \x)$ and
    \begin{equation*}
        [\nabla\times\Vp{p}^\text{unnorm.}[k](\x)]_{\idx(i,j)} = (-1)^{i+j}\left(
            \frac{\del k(\x,\y_0)}{\del x_j}(y_{0,i} - x_i)
            - \frac{\del k(\x,\y_0)}{\del x_i}(y_{0,j} - x_j)
        \right).
    \end{equation*}
    Setting $\y_0 = \x$ gives $\nabla\times\Vp{p}^\text{unnorm.}[k](\x) = \mathbf{0}$. Since for every $\x$ there exists a measure $\delta(\yp - \x)$ that annihilates the curl at $\x$, and the curl is $p$-invariant by assumption, it must vanish for all $p$ and all $\x$.
\end{proof}

\begin{theorem}\label{thm:subfield_conservative}
    If $\V^\text{unnorm.}_{p,q}[k]$ is conservative for all $p, q \in \P(\R^n)$, then $\Vp{p}^\text{unnorm.}[k]$ is conservative for all $p \in \P(\R^n)$, and likewise $\Vn{q}^\text{unnorm.}[k]$.
\end{theorem}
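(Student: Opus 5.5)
The plan is to use the linearity of the curl together with \Cref{lem:curl_invariant_zero}. The curl is a linear operator on vector fields, so for all $p,q\in\P(\R^n)$
\begin{equation*}
    \nabla\times\V^\text{unnorm.}_{p,q}[k] = \nabla\times\Vp{p}^\text{unnorm.}[k] - \nabla\times\Vn{q}^\text{unnorm.}[k].
\end{equation*}
By hypothesis and \Cref{lem:jacobian_conservative}, the left-hand side vanishes identically for every pair $(p,q)$. Hence
\begin{equation*}
    \nabla\times\Vp{p}^\text{unnorm.}[k]=\nabla\times\Vn{q}^\text{unnorm.}[k] \quad \text{for all } p,q.
\end{equation*}
The left side depends only on $p$ and the right side only on $q$. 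Fixing $q=q_0$ shows that $\nabla\times\Vp{p}^\text{unnorm.}[k]$ equals the fixed field $\nabla\times\Vn{q_0}^\text{unnorm.}[k]$ for every $p$, so it is invariant to $p$.

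\Cref{lem:curl_invariant_zero} then gives $\nabla\times\Vp{p}^\text{unnorm.}[k]=\mathbf{0}$ for all $p$. Symmetric Jacobian (zero curl) is equivalent to conservatism by \Cref{lem:jacobian_conservative}, so $\Vp{p}^\text{unnorm.}[k]$ is conservative.

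The negative subfield is handled the same way. Both subfields are given by the same functional $\E_{\y\sim\cdot}[k(\x,\y)(\y-\x)]$ evaluated at different measures, so the lemma applies verbatim to the $q$-component. Alternatively, substituting the now-known $\nabla\times\Vp{p}^\text{unnorm.}[k]=\mathbf{0}$ into the displayed identity gives $\nabla\times\Vn{q}^\text{unnorm.}[k]=\mathbf{0}$ directly.

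The main technical point is regularity. Differentiating under the expectation and applying \Cref{lem:curl_invariant_zero} with Dirac measures both need $k$ to be differentiable in $\x$ with integrable derivatives. I would state this as a standing assumption, namely $k$ continuously differentiable with suitable domination. Dirac measures are admissible elements of $\P(\R^n)$, so the lemma's use of $\delta(\yp-\x)$ is legitimate. The separation-of-variables step itself is routine.
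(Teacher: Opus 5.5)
Your proof is correct and follows the paper's argument: linearity of the curl turns the hypothesis into $\nabla\times\Vp{p}^\text{unnorm.}[k]=\nabla\times\Vn{q}^\text{unnorm.}[k]$ for all $p,q$, so the sub-field curl does not depend on the measure, and \Cref{lem:curl_invariant_zero} then forces it to vanish. The differences are cosmetic: you argue directly rather than by contradiction, and you apply the lemma to the positive rather than the negative component; your explicit regularity assumption for differentiating under the expectation and for the Dirac-measure step is a useful addition the paper leaves implicit.
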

\begin{proof}
    Suppose for contradiction that $\V^\text{unnorm.}_{p,q}[k]$ is conservative for all $p, q$ but there exists $p$ such that $\Vp{p}^\text{unnorm.}[k]$ is not conservative. From $\nabla\times\V^\text{unnorm.}_{p,q}[k] = \mathbf{0}$ it follows that
    \begin{align*}
        &\mathbf{0} = \nabla\times\Vp{p}^\text{unnorm.}[k] - \nabla\times\Vn{q}^\text{unnorm.}[k] \\
        \quad\Longrightarrow\quad
        &\nabla\times\Vn{q}^\text{unnorm.}[k] = \nabla\times\Vp{p}^\text{unnorm.}[k] \neq \mathbf{0}
        \quad \text{for all } q \in \P(\R^n).
    \end{align*}
    Thus $\nabla\times\Vn{q}^\text{unnorm.}[k]$ is invariant to $q$ and non-zero. By Lemma~\ref{lem:curl_invariant_zero}, however, a kernel subcomponent with $q$-invariant curl must have zero curl — a contradiction. The same argument with $p$ and $q$ swapped yields the result for $\Vn{q}^\text{unnorm.}[k]$.
\end{proof}

Consequently, to establish nonconservatism of $\V^\text{unnorm.}_{p,q}[k]$ it suffices to show that $\Vp{p}^\text{unnorm.}[k]$ or $\Vn{q}^\text{unnorm.}[k]$ is nonconservative for some $p$ or $q$, respectively.

\subsection{Nonconservatism in Arbitrary Dimensions}
\label{sec:nonconservatism-arbitrary-dim}

We show that a single counterexample in $\R^2$ suffices to establish nonconservatism in all dimensions $n \geq 2$.

\begin{lemma}[Restriction preserves conservatism]
\label{lem:restriction-conservative}
Let $f: \R^{n+1} \to \R^{n+1}$ be conservative with potential $\Pot$, and fix $c \in \R$. Define $g: \R^n \to \R^n$ by $g_i(\x) = f_i(x_1, \ldots, x_n, c)$ for $i = 1, \ldots, n$. Then $g$ is conservative with potential $\tilde\Pot(\x) = \Pot(x_1, \ldots, x_n, c)$.
\end{lemma}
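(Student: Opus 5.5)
The plan is to verify directly that $\tilde\Pot$ is a potential for $g$, using only the definition of conservatism: $f = -\nabla \Pot$ on all of $\R^{n+1}$, with the same sign convention carried over to $g$.

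First, define the affine embedding $\iota_c:\R^n\to\R^{n+1}$ by $\iota_c(\x) = (x_1,\ldots,x_n,c)$. Then $\tilde\Pot = \Pot\circ\iota_c$, and $g_i = f_i\circ\iota_c$ for $i\le n$. Since $\Pot$ is differentiable (its gradient is $-f$), the chain rule gives
\begin{equation*}
\frac{\del \tilde\Pot}{\del x_i}(\x) = \sum_{j=1}^{n+1}\frac{\del \Pot}{\del x_j}(\iota_c(\x))\,\frac{\del (\iota_c)_j}{\del x_i}(\x).
\end{equation*}
The Jacobian of $\iota_c$ is the $(n+1)\times n$ matrix consisting of the identity on the first $n$ rows and a zero last row, because the last coordinate is the constant $c$. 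Hence only the term $j=i$ survives, and
\begin{equation*}
\frac{\del \tilde\Pot}{\del x_i}(\x) = \frac{\del\Pot}{\del x_i}(\iota_c(\x)) = -f_i(\iota_c(\x)) = -g_i(\x)
\end{equation*}
for every $i=1,\ldots,n$ and every $\x\in\R^n$. This is exactly the statement that $g = -\nabla\tilde\Pot$, so $g$ is conservative with potential $\tilde\Pot$.

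As a cross-check via \Cref{lem:jacobian_conservative}, the Jacobian of $g$ at $\x$ is the upper-left $n\times n$ block of the Jacobian of $f$ at $\iota_c(\x)$. A principal submatrix of a symmetric matrix is symmetric, which yields the same conclusion. I would mention this only as a remark.

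There is no real obstacle here. The one point needing care is the final $(n{+}1)$-th component: $f_{n+1}$ is simply discarded, and the $\partial_{x_{n+1}}\Pot$ term drops out of the chain rule because $\iota_c$ is constant in that coordinate. The argument needs only differentiability of $\Pot$, which is implicit in the definition of conservatism.
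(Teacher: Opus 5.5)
Your proof is correct and is the same argument as the paper's: the paper observes that $\partial_{x_i}\tilde\Pot(\x) = \partial_{x_i}\Pot(\x,c)$ because the last coordinate is frozen, which is exactly your chain-rule computation with the zero last row of the Jacobian of $\iota_c$. (The paper writes $f=\nabla\Pot$ in this proof, whereas you use the $-\nabla\Pot$ convention of its definition, which changes nothing, and the chain rule is not actually needed, since $\partial_{x_i}\tilde\Pot(\x)$ is by definition the same difference-quotient limit as $\partial_{x_i}\Pot(\x,c)$.)
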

\begin{proof}
Since $f = \nabla \Pot$, we have $g_i(\x) = \frac{\del \Pot}{\del x_i}\big|_{x_{n+1}=c} = \frac{\del}{\del x_i}\Pot(\x, c) = \frac{\del \tilde\Pot}{\del x_i}(\x)$, so $g = \nabla \tilde\Pot$.
\end{proof}

\begin{lemma}[Embedding of drift fields]
\label{lem:embedding-drift}
Let $\y_1, \ldots, \y_M \in \R^n$ and define $\tilde\y_m = (\y_m, 0) \in \R^{n+1}$. Denote by $\Vp{p}$ the drift sub-field in $\R^n$ with data $\{\y_m\}$ and by $\widetilde{\Vp{p}}$ the drift sub-field in $\R^{n+1}$ with data $\{\tilde\y_m\}$. Then for all $\x \in \R^n$:
\begin{equation*}
    \Vp{p}(\x) = \bigl(\widetilde{\Vp{p}}(\x, 0)\bigr)_{1:n},
\end{equation*}
where $(\cdot)_{1:n}$ denotes the first $n$ components.
\end{lemma}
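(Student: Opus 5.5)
The plan is to compute both sides directly from the definition of the drift sub-field in \cref{eq:drift-field-components}, with $p$ the empirical measure $\frac{1}{M}\sum_m \delta_{\y_m}$ (equivalently $\tilde p = \frac{1}{M}\sum_m \delta_{\tilde\y_m}$ in $\R^{n+1}$). I will assume the kernel is radial, $k(\x,\y)=\phi(\|\x-\y\|^2)$, with the same profile $\phi$ used in both dimensions. This is the setting in which the lemma is applied, and it is what makes ``the same kernel in $\R^{n+1}$'' meaningful. The whole argument rests on the fact that distances are preserved under the embedding $\x\mapsto(\x,0)$.

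\textbf{Step 1 (kernel values agree).} For $\x\in\R^n$ I will note that
\begin{equation*}
\|(\x,0)-\tilde\y_m\|^2=\|\x-\y_m\|^2+0^2=\|\x-\y_m\|^2 .
\end{equation*}
Hence $\tilde k((\x,0),\tilde\y_m)=k(\x,\y_m)$ for every $m$. Averaging over $m$ shows that the normalizers coincide, $\tilde Z_{\tilde p}((\x,0))=Z_p(\x)$.

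\textbf{Step 2 (displacements agree in the first $n$ coordinates).} I will write $\tilde\y_m-(\x,0)=(\y_m-\x,\,0)$. Then the numerator $\frac1M\sum_m \tilde k((\x,0),\tilde\y_m)(\tilde\y_m-(\x,0))$ equals $\bigl(\frac1M\sum_m k(\x,\y_m)(\y_m-\x),\,0\bigr)$. Dividing by the common normalizer from Step 1 gives
\begin{equation*}
\widetilde{\Vp{p}}(\x,0)=\bigl(\Vp{p}(\x),0\bigr).
\end{equation*}
Taking the first $n$ components yields the claim. As a byproduct, the last component vanishes on the hyperplane $x_{n+1}=0$.

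\textbf{Remarks on scope and obstacles.} Because Step 2 treats numerator and normalizer separately, the same argument covers the unnormalized field and the sharp-normalized field. For the sharp case I would use that $\tilde k^\#$ is also radial with the same profile integral \cref{eq:sharp-flat-radial}. The proof itself is elementary. The only real obstacle is making precise what ``the drift sub-field in $\R^{n+1}$'' means: the kernel must be specified by a dimension-independent radial profile rather than by some dimension-dependent normalization constant. If a kernel carried such a constant, for example a normalized density, the two normalizers would differ by that constant. I would handle this by observing that the constant cancels between numerator and denominator in the normalized field, so the identity still holds.
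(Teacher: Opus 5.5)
Your proposal is correct and matches the paper's proof. The squared distance is preserved under $\x\mapsto(\x,0)$, so kernel values and normalizers agree, and the displacement $(\y_m-\x,0)$ reproduces $\Vp{p}(\x)$ in the first $n$ components. Stating the radial-kernel assumption explicitly is a useful addition, since the paper's step $k(\tilde\x,\tilde\y_m)=k(\x,\y_m)$ relies on it without saying so.
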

\begin{proof}
At $\tilde\x = (\x, 0)$, we have $\|\tilde\x - \tilde\y_m\|^2 = \|\x - \y_m\|^2$, so $k(\tilde\x, \tilde\y_m) = k(\x, \y_m)$. The first $n$ components of the $(n{+}1)$-dimensional field are
\begin{equation*}
    \bigl(\widetilde{\Vp{p}}(\x, 0)\bigr)_{1:n}
    = \frac{
        \sum_m k(\x, \y_m)(\y_m - \x)
    }{
        \sum_m k(\x, \y_m)
    }
    = \Vp{p}(\x).
\end{equation*}
\end{proof}

\begin{proposition}
\label{prop:counterexample-induction}
If $\y_1, \ldots, \y_M \in \R^n$ is a counterexample to conservatism, then $\tilde\y_1, \ldots, \tilde\y_M \in \R^{n+1}$ is a counterexample in $\R^{n+1}$.
\end{proposition}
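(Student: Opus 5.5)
We argue by contraposition, combining \Cref{lem:embedding-drift} with \Cref{lem:restriction-conservative}. Suppose the lifted data $\tilde\y_1, \ldots, \tilde\y_M \in \R^{n+1}$ were \emph{not} a counterexample, i.e.\ the lifted drift sub-field $\widetilde{\Vp{p}}: \R^{n+1} \to \R^{n+1}$ is conservative with some potential $\Pot$, so that $\widetilde{\Vp{p}} = -\nabla \Pot$. We will derive that the original field $\Vp{p}$ on $\R^n$ is then conservative. This contradicts the hypothesis that $\y_1, \ldots, \y_M$ is a counterexample.

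First, apply \Cref{lem:restriction-conservative} to $f := \widetilde{\Vp{p}}$ with the slice $x_{n+1} = c := 0$. The lemma is stated for $f = \nabla\Pot$; the sign convention is irrelevant, so we simply replace $\Pot$ by $-\Pot$. It yields that $g(\x) := \bigl(\widetilde{\Vp{p}}(\x, 0)\bigr)_{1:n}$ is conservative on $\R^n$, with potential $\tilde\Pot(\x) = \Pot(\x, 0)$ up to that sign. Second, by \Cref{lem:embedding-drift}, $g(\x) = \Vp{p}(\x)$ for all $\x \in \R^n$. Hence $\Vp{p} = -\nabla \tilde\Pot$ is conservative, which is the desired contradiction.

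The same argument works for the negative sub-field and for the composite field $\V_{p,q}$. The embedding lemma holds componentwise for each of the sums in the numerator and denominator, so it applies to both parts. If instead one phrases ``counterexample'' through a nonzero curl entry at some point $\x_0$, the Jacobian version of the argument is even more direct. For $i,j \le n$, the entries $\del \widetilde{V}_i / \del x_j$ evaluated at $(\x_0, 0)$ coincide with $\del V_i / \del x_j$ at $\x_0$, because the derivatives are taken within the slice. The nonzero curl component in $\R^n$ therefore persists in $\R^{n+1}$, and \Cref{lem:jacobian_conservative} gives nonconservatism.

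The only subtle step is checking that \Cref{lem:embedding-drift} is an identity of functions on the whole slice, not merely at isolated points. Only then can we legitimately differentiate along the slice and identify the restricted potential with one for $\Vp{p}$. The lemma holds for every $\x \in \R^n$, and the restriction lemma differentiates only in the first $n$ coordinates, so this is satisfied. Finally, induction on $n$ starting from the 2D counterexample gives nonconservatism in every dimension $n \ge 2$.
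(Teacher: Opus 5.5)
Your proof is correct and follows the paper's argument: argue by contraposition, restrict the lifted field to the slice $\{x_{n+1}=0\}$ via \Cref{lem:restriction-conservative}, and identify that restriction with $\Vp{p}$ via \Cref{lem:embedding-drift}. Your extra remarks are sound but not needed for the proposition: the sign-convention fix, the Jacobian/curl variant, and the extension to the negative and composite fields.
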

\begin{proof}
By contrapositive. Suppose $\widetilde{\Vp{p}}$ were conservative. By \Cref{lem:restriction-conservative}, restricting to $\{x_{n+1} = 0\}$ and taking the first $n$ components yields a conservative field. By \Cref{lem:embedding-drift}, this restricted field equals $\Vp{p}$, contradicting the assumption that $\{\y_m\}$ is a counterexample.
\end{proof}

Since we exhibit a counterexample in $n = 2$ (see \Cref{fig:spotlight}), it follows by induction that counterexamples to conservatism of normalized drift fields exist in all dimensions $n \geq 2$.

\subsection{Radiality is a Requirement for Conservatism}
\begin{lemma}
\label{lem:radial-kernel}
When the field $\V_{p,q}^\text{unnorm.}[k](\x)$ is conservative $k$ must be a \textbf{radial} kernel.
\end{lemma}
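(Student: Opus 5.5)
We read the statement as: if $\V_{p,q}^\text{unnorm.}[k]$ is conservative for \emph{all} $p,q\in\P(\R^n)$, then $k(\x,\y)$ depends only on $\|\x-\y\|^2$. The plan has three steps: reduce to a single sub-field, specialize to Dirac measures, and read off radiality from Jacobian symmetry.

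\textbf{Reduction to Dirac measures.} By \Cref{thm:subfield_conservative}, conservatism of the composite field for all $p,q$ implies that $\Vp{p}^\text{unnorm.}[k]$ is conservative for every $p$. In particular this holds for $p=\delta_{\y}$ with arbitrary fixed $\y$. The field $\x\mapsto k(\x,\y)(\y-\x)$ is then conservative. By \Cref{lem:jacobian_conservative}, its Jacobian is symmetric, which gives for all $i\neq j$:
\begin{equation*}
    \frac{\del k(\x,\y)}{\del x_j}(y_i-x_i) = \frac{\del k(\x,\y)}{\del x_i}(y_j-x_j).
\end{equation*}
The $\delta_{ij}k$ terms coming from differentiating $(\y-\x)$ are symmetric and cancel. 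Equivalently, $\nabla_\x k(\x,\y)$ is parallel to $\y-\x$ wherever $\x\neq\y$.

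\textbf{From parallel gradient to radiality.} Fix $\y$ and substitute $\mathbf{u}=\x-\y$, so that $g(\mathbf{u}) := k(\y+\mathbf{u},\y)$ satisfies $\nabla g(\mathbf{u}) = \lambda(\mathbf{u})\,\mathbf{u}$ for some scalar $\lambda$ on $\mathbf{u}\neq 0$. Then $g$ has zero derivative along any curve tangent to a sphere $\{\|\mathbf{u}\|=r\}$, since the tangent vectors are orthogonal to $\mathbf{u}$. For $n\ge 2$ these spheres are connected, so $g$ is constant on each of them. Hence $k(\x,\y)=\phi_{\y}(\|\x-\y\|^2)$.

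\textbf{Removing the dependence on $\y$.} The remaining step, and the main obstacle, is to show that the profile $\phi_{\y}$ does not depend on $\y$. The sphere argument alone permits a $\y$-dependent profile, so a further ingredient is needed. First I would use symmetry of the kernel: $\phi_{\y}(\|\x-\y\|^2)=\phi_{\x}(\|\x-\y\|^2)$ for all $\x,\y$. Given any two points $\mathbf{a},\mathbf{b}$ and any $r$, connect them by a chain of points with consecutive spacing $\sqrt{r}$ (possible for $n\ge 2$ whenever $r>0$). Applying the symmetry identity along the chain gives $\phi_{\mathbf{a}}(r)=\phi_{\mathbf{b}}(r)$. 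The value $r=0$ then follows by continuity, assuming $k$ is $C^1$ as already needed for the Jacobian. This chain argument is the step I would check most carefully. If it proves awkward, I would fall back on the mixed-measure condition $p=\tfrac12(\delta_{\y_1}+\delta_{\y_2})$; this only adds cross terms that are already individually symmetric, so the chain argument is the intended route. Finally, I would note that $n=1$ is vacuous, since every field on $\R$ is conservative, and so the statement is meant for $n\ge 2$.
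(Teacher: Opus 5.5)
Your proposal is correct and follows the same route as the paper's proof. Both arguments reduce to conservatism of the individual sub-fields via \Cref{thm:subfield_conservative}, use Jacobian symmetry of $k(\x,\y)(\y-\x)$ to get $\nabla_\x k \parallel \y-\x$ and hence constancy on spheres, and then use symmetry of $k$ to remove the $\y$-dependence of the profile. Your chain argument, the continuity step at $r=0$ and the $n\ge 2$ restriction just make explicit what the paper's one-line appeal to symmetry leaves implicit. The mixed-measure fallback is unnecessary, since the field is linear in $p$ and mixtures impose nothing beyond the Dirac conditions.
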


\begin{proof}
For the field to be generally conservative both components must be conservative individually. We thus know that the inside of the expectation $u(\x, \y) := k(\x, \y)(\y - \x)$ has to have a symmetric Jacobian: $\frac{\del}{\del x_j}[k(\x,\y)(\y_i - \x_i)] = \frac{\del}{\del x_i}[k(\x,\y)(\y_j - \x_j)]$ for all $i \neq j$. Rearranging this we get:
\begin{equation*}
    \frac{\frac{\del k}{\del x_j}}{\frac{\del k}{\del x_i}} =  \frac{\y_j - \x_j}{\y_i - \x_i},
\end{equation*}
meaning the gradient $\nabla_\x k(\x, \y)$ is parallel to $(\y - \x)$. For any fixed $\y$, this implies $k$ is constant on spheres centered at $\y$, i.e.\ $k(\x,\y) = \phi_\y(\|\x - \y\|^2)$ for some function $\phi_\y$. Since $k$ is symmetric ($k(\x,\y) = k(\y,\x)$), the profile $\phi_\y$ does not depend on $\y$, and we obtain $k(\x,\y) = \phi(\|\x - \y\|^2)$.
\end{proof}
\section{Detailed Kernel Derivations}
\label{app:kernel-derivations}

This appendix provides detailed derivations for the sharp and flat kernels of common radial kernels, supporting the summary given in \cref{tab:kernel-examples}.

\subsection{Gaussian Kernel}
\begin{corollary}
\label{corr:gaussian_kernel_sharp_proof}
For the Gaussian kernel, we have $k^\#(\x, \y) = \sigma^2 k(\x,\y)$ and $k^\flat(\x, \y) = \frac{1}{\sigma^2} k(\x,\y)$, so both are proportional to $k$.
\end{corollary}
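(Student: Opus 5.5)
We check both claims directly against \Cref{def:sharp-flat}, using the radial closed forms \eqref{eq:sharp-flat-radial}. The Gaussian kernel is radial, $k(\x,\y)=\phi(\|\x-\y\|^2)$, with profile $\phi(r)=\exp(-r/2\sigma^2)$.

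\textbf{Flat kernel.} First compute $\phi'(r)=-\frac{1}{2\sigma^2}\phi(r)$. Then
\[
k^\flat(\x,\y)=-2\phi'(\|\x-\y\|^2)=\tfrac{1}{\sigma^2}\,k(\x,\y).
\]
As a check, differentiating directly gives $\nabla_\x k(\x,\y)=-\frac{1}{\sigma^2}(\x-\y)k(\x,\y)=\frac{1}{\sigma^2}k(\x,\y)(\y-\x)$, which is exactly the defining relation for $k^\flat$.

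\textbf{Sharp kernel.} Evaluate the antiderivative
\[
k^\#(\x,\y)=\tfrac12\int_{\|\x-\y\|^2}^\infty e^{-r/2\sigma^2}\,\d r=\tfrac12\cdot 2\sigma^2 e^{-\|\x-\y\|^2/2\sigma^2}=\sigma^2 k(\x,\y).
\]
The integral converges because the integrand decays exponentially, so the sharp kernel exists and is finite. We verify it against the definition: $\nabla_\x(\sigma^2 k)=\sigma^2\cdot\frac{1}{\sigma^2}k(\y-\x)=k(\x,\y)(\y-\x)$, as required.

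This direct verification is also how we justify the closed forms for this kernel. One might worry that the antiderivative in \eqref{eq:sharp-flat-radial} is only determined up to an additive constant. Fixing the lower limit at infinity is the natural normalization, since it makes $k^\#\to 0$ at infinity, and that choice gives exactly $\sigma^2 k$. This calculation is routine and we expect no real obstacle. Both kernels are therefore scalar multiples of $k$, which proves the corollary.
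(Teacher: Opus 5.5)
Your proposal is correct and matches the paper's proof step for step: both insert the profile $\phi(r)=\exp(-r/(2\sigma^2))$ into the radial closed forms \eqref{eq:sharp-flat-radial}, evaluate $\tfrac12\int_{\|\x-\y\|^2}^\infty\phi(r)\,\d r=\sigma^2k(\x,\y)$, and compute $-2\phi'=\sigma^{-2}\phi$. Your extra check against the defining gradient relations of \Cref{def:sharp-flat} is a useful addition; one small wording slip is that the integration constant is fixed by the \emph{upper} limit sitting at infinity, so that $k^\#\to 0$ as $\|\x-\y\|\to\infty$.
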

\begin{proof}
Consider the Gaussian kernel with bandwidth $\sigma$:
\begin{equation*}
    k(\x,\y) = \exp\left(-\frac{\|\x-\y\|^2}{2\sigma^2}\right)
\end{equation*}

In radial form, this corresponds to $\phi(r) = \exp(-r/(2\sigma^2))$.

\emph{Computing the sharp:} Using the general formula for radial kernels:
\begin{align*}
    k^\#(\x, \y) &= \frac{1}{2}\int_{\|\x-\y\|^2}^\infty \phi(r) \, \d r
               = \frac{1}{2}\int_{\|\x-\y\|^2}^\infty e^{-r/(2\sigma^2)} \, \d r \\
               &= \frac{1}{2}\left[ -2\sigma^2 e^{-r/(2\sigma^2)} \right]_{\|\x-\y\|^2}^\infty
               = \sigma^2 \, k(\x,\y)
\end{align*}

\emph{Computing the flat:} We have $\phi'(r) = -\frac{1}{2\sigma^2} e^{-r/(2\sigma^2)}$. Then:
\begin{align*}
    k^\flat(\x, \y) &= -2\phi'(\|\x - \y\|^2)
                  = \frac{1}{\sigma^2} e^{-\|\x-\y\|^2/(2\sigma^2)}
                  = \frac{1}{\sigma^2} \, k(\x,\y)
\end{align*}

Thus both the sharp and flat are proportional to the original Gaussian kernel.
\end{proof}

\subsection{Laplacian Kernel}
\begin{corollary}
\label{corr:laplace_flat_sharp_proof}
For the Laplacian kernel $k(\x,\y) = \exp(-\|\x-\y\|/\sigma)$, the sharp is $k^\#(\x, \y) = \sigma (\|\x-\y\| + \sigma) k(\x,\y)$ and the flat is $k^\flat(\x, \y) = \frac{k(\x,\y)}{\sigma\|\x-\y\|}$.
\end{corollary}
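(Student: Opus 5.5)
The plan is to apply the radial closed forms in \cref{eq:sharp-flat-radial} directly. Writing $r := \|\x-\y\|$, the Laplacian kernel has radial profile $\phi(s) = \exp(-\sqrt{s}/\sigma)$ in terms of the squared distance $s = r^2$. Both claimed formulas then reduce to one-dimensional calculus on $\phi$. Afterwards I would check them against the defining identities in \cref{def:sharp-flat}, which also guards against slips in the substitution.

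\emph{Sharp.} I would compute $k^\#(\x,\y) = \tfrac12\int_{r^2}^\infty e^{-\sqrt{s}/\sigma}\,\d s$ using the substitution $s = t^2$, $\d s = 2t\,\d t$. This turns the integral into $\int_r^\infty t\,e^{-t/\sigma}\,\d t$. One integration by parts evaluates it: the antiderivative is $-\sigma(t+\sigma)e^{-t/\sigma}$, and the boundary term at infinity vanishes because of exponential decay. The result is $\sigma(r+\sigma)e^{-r/\sigma} = \sigma(r+\sigma)\,k(\x,\y)$, which is the claimed sharp.

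As a cross-check, I would differentiate directly. The gradient is $\nabla_\x[\sigma(r+\sigma)e^{-r/\sigma}] = \sigma\bigl(1 - (r+\sigma)/\sigma\bigr)e^{-r/\sigma}\nabla_\x r = -r\,e^{-r/\sigma}\,(\x-\y)/r$. This equals $k(\x,\y)(\y-\x)$, as required by \cref{eq:sharp_flat}.

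\emph{Flat.} I would compute $\phi'(s) = -\tfrac{1}{2\sigma\sqrt{s}}e^{-\sqrt{s}/\sigma}$, so $k^\flat = -2\phi'(r^2) = e^{-r/\sigma}/(\sigma r)$. The cross-check is $\nabla_\x e^{-r/\sigma} = -\tfrac1\sigma e^{-r/\sigma}(\x-\y)/r = \tfrac{k(\x,\y)}{\sigma r}(\y-\x)$.

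The only delicate point, and the expected obstacle, is the non-smoothness at $r=0$. There $\phi'$ blows up, so the flat is singular on the diagonal, as the text anticipates. I would state the flat identity for $\x\neq\y$ only and remark that it extends in the sense that $k^\flat(\x,\y)(\y-\x)$ stays bounded. The sharp, by contrast, is $C^1$ everywhere, since its gradient $-(\x-\y)e^{-r/\sigma}$ is continuous at $\x=\y$, so no exception is needed there.
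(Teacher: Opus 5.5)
Your proposal is correct and follows the paper's proof essentially step for step: the substitution $s=t^2$ with one integration by parts gives the sharp, and $k^\flat=-2\phi'(r^2)$ gives the flat. Your two additions go beyond what the paper writes but are both sound: the direct-differentiation checks against the defining identities, and the observation that the flat identity holds only for $\x\neq\y$ while the sharp is $C^1$ across the diagonal.
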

\begin{proof}
The Laplacian kernel is $k(\x,\y) = \exp(-\|\x-\y\|/\sigma)$. In radial form, $\phi(r) = \exp(-\sqrt{r}/\sigma)$ where $r = \|\x-\y\|^2$.

\emph{Computing the sharp:} Using the general formula:
\begin{align*}
    k^\#(\x, \y) &= \frac{1}{2}\int_{\|\x-\y\|^2}^\infty \exp(-\sqrt{r}/\sigma) \, \d r
\end{align*}

Let $u = \sqrt{r}$, so $r = u^2$ and $\d r = 2u \, \d u$. When $r = \|\x-\y\|^2$, we have $u = \|\x-\y\|$:
\begin{align*}
    k^\#(\x, \y) &= \int_{\|\x-\y\|}^\infty u \exp(-u/\sigma) \, \d u
\end{align*}

Using integration by parts:
\begin{align*}
    &= \left[ -\sigma u \exp(-u/\sigma) \right]_{\|\x-\y\|}^\infty + \sigma \int_{\|\x-\y\|}^\infty \exp(-u/\sigma) \, \d u \\
    &= \sigma \|\x-\y\| \exp(-\|\x-\y\|/\sigma) + \sigma^2 \exp(-\|\x-\y\|/\sigma) \\
    &= \sigma (\|\x-\y\| + \sigma) k(\x,\y)
\end{align*}

\emph{Computing the flat:} For $\phi(r) = \exp(-\sqrt{r}/\sigma)$, we have:
\begin{equation*}
    \phi'(r) = -\frac{1}{2\sigma\sqrt{r}} \exp(-\sqrt{r}/\sigma)
\end{equation*}

Therefore:
\begin{align*}
    k^\flat(\x, \y) &= -2\phi'(\|\x - \y\|^2)
                  = \frac{1}{\sigma\|\x-\y\|} \exp(-\|\x-\y\|/\sigma)
                  = \frac{k(\x,\y)}{\sigma\|\x-\y\|}
\end{align*}
\end{proof}

\subsection{Rational Quadratic Kernel}
\begin{corollary}
\label{corr:rqk_flat_sharp_proof}
For the rational quadratic kernel $k(\x,\y) = (1 + \|\x-\y\|^2/\sigma^2)^{-2}$, the sharp is $k^\#(\x, \y) = \frac{\sigma^2}{2} k(\x,\y)^{1/2}$ and the flat is $k^\flat(\x, \y) = \frac{4}{\sigma^2} k(\x,\y)^{3/2}$.
\end{corollary}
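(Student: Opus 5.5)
We follow the template of \Cref{corr:gaussian_kernel_sharp_proof} and \Cref{corr:laplace_flat_sharp_proof}. We write the rational quadratic kernel in radial form, $k(\x,\y)=\phi(\|\x-\y\|^2)$ with $\phi(r)=(1+r/\sigma^2)^{-2}$. We then apply the closed-form expressions for radial kernels from \cref{eq:sharp-flat-radial}: $k^\#(\x,\y)=\tfrac12\int_{\|\x-\y\|^2}^\infty\phi(r)\,\d r$ and $k^\flat(\x,\y)=-2\phi'(\|\x-\y\|^2)$.

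For the sharp, we compute an antiderivative of $\phi$. The substitution $u=1+r/\sigma^2$ (so $\d r=\sigma^2\,\d u$) turns the integral into $\tfrac{\sigma^2}{2}\int_{u_0}^\infty u^{-2}\,\d u=\tfrac{\sigma^2}{2}u_0^{-1}$, where $u_0=1+\|\x-\y\|^2/\sigma^2$. The boundary term at infinity vanishes because $u^{-1}\to0$. It remains to observe that $u_0^{-1}=\bigl(u_0^{-2}\bigr)^{1/2}=k(\x,\y)^{1/2}$, which gives $k^\#=\tfrac{\sigma^2}{2}k^{1/2}$.

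For the flat, we differentiate: $\phi'(r)=-2\sigma^{-2}(1+r/\sigma^2)^{-3}$. Hence $-2\phi'(\|\x-\y\|^2)=\tfrac{4}{\sigma^2}(1+\|\x-\y\|^2/\sigma^2)^{-3}=\tfrac{4}{\sigma^2}k(\x,\y)^{3/2}$.

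As a sanity check, we verify that the defining relations of \Cref{def:sharp-flat} hold directly. By the chain rule, $\nabla_\x k^\#=\tfrac{\sigma^2}{2}\cdot\tfrac12 k^{-1/2}\nabla_\x k$, and $\nabla_\x k = k^\flat(\y-\x)$. Together these give $\nabla_\x k^\#=k(\y-\x)$, confirming $(k^\flat)^\#=k$. This check also shows there are no sign errors.

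There is no real obstacle. The only points needing care are two. First, the integral converges, since $\phi$ decays like $r^{-2}$. Second, the result must be expressed as powers of $k$, which is where the exponents $1/2$ and $3/2$ come from.
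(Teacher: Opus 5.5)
Your proof is correct and follows the paper's route: you write the kernel in radial form, evaluate the sharp integral via the substitution $1 + r/\sigma^2$, and get the flat by differentiating $\phi$ directly. Your chain-rule consistency check is a nice extra, though it actually verifies the sharp identity $\nabla_\x k^\# = k(\y-\x)$, i.e.\ $(k^\#)^\flat = k$, rather than $(k^\flat)^\# = k$.
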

\begin{proof}
In radial form, $\phi(r) = (1 + r/\sigma^2)^{-2}$.

\emph{Computing the sharp:}
\begin{align*}
    k^\#(\x, \y) &= \frac{1}{2}\int_{\|\x-\y\|^2}^\infty (1 + r/\sigma^2)^{-2} \, \d r
\end{align*}

Let $t = 1 + r/\sigma^2$, so $\d r = \sigma^2 \, \d t$:
\begin{align*}
    k^\#(\x, \y) &= \frac{\sigma^2}{2}\int_{1+\|\x-\y\|^2/\sigma^2}^\infty t^{-2} \, \d t
               = \frac{\sigma^2}{2} \left[ -t^{-1} \right]_{1+\|\x-\y\|^2/\sigma^2}^\infty \\
               &= \frac{\sigma^2}{2} (1 + \|\x-\y\|^2/\sigma^2)^{-1}
               = \frac{\sigma^2}{2} k(\x,\y)^{1/2}
\end{align*}

\emph{Computing the flat:} We have $\phi'(r) = -\frac{2}{\sigma^2}(1 + r/\sigma^2)^{-3}$. Then:
\begin{align*}
    k^\flat(\x, \y) &= -2\phi'(\|\x - \y\|^2)
                  = \frac{4}{\sigma^2}(1 + \|\x-\y\|^2/\sigma^2)^{-3}
                  = \frac{4}{\sigma^2} k(\x,\y)^{3/2}
\end{align*}
\end{proof}

\subsection[Half-Integer Mat\'ern Kernels]{Half-Integer Mat\'ern Kernels ($\nu = p + \tfrac{1}{2}$)}
\label{app:matern-derivation}
Let $p \in \mathbb{Z}_{\geq 0}$, $\nu = p+\tfrac{1}{2}$, and $c := \sqrt{2\nu}/\sigma$. The half-integer Mat\'ern kernel can be written without special functions as
\begin{equation*}
    k_\nu(\x,\y) = Q_p(c\|\x-\y\|)\,e^{-c\|\x-\y\|},
    \qquad
    Q_p(t) = \sum_{i=0}^{p}
    \frac{p!(p+i)!}{(2p)!\,i!\,(p-i)!}(2t)^{p-i}.
\end{equation*}
Thus $Q_0(t)=1$ gives the Laplacian kernel, $Q_1(t)=1+t$ gives Mat\'ern-$\tfrac{3}{2}$, and $Q_2(t)=1+t+t^2/3$ gives Mat\'ern-$\tfrac{5}{2}$.

\begin{proposition}[Sharp of half-integer Mat\'ern]
\label{prop:matern-half-sharp}
Let $p \in \mathbb{Z}_{\geq 0}$, $c>0$, and
\begin{equation*}
    k(\x,\y) = Q_p(c\|\x-\y\|)\,e^{-c\|\x-\y\|},
    \qquad \x,\y \in \R^d.
\end{equation*}
Then
\begin{equation*}
    k^\#(\x,\y) =
    \frac{2p+1}{c^2}\,Q_{p+1}(c\|\x-\y\|)\,e^{-c\|\x-\y\|}.
\end{equation*}
\end{proposition}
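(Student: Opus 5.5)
The plan is to reduce the claim to a one-dimensional antiderivative identity for the polynomials $Q_p$, using the radial closed form \cref{eq:sharp-flat-radial}.

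\textbf{Step 1: reduce to a one-variable integral.} Write $s := \|\x-\y\|$ and view $k$ as $\phi(r) = Q_p(c\sqrt r)\,e^{-c\sqrt r}$. Substitute $u = \sqrt r$, exactly as in the Laplacian computation of \Cref{corr:laplace_flat_sharp_proof}. Then substitute $t = cu$. This gives
\begin{equation*}
k^\#(\x,\y) = \int_s^\infty u\,Q_p(cu)\,e^{-cu}\,\d u = \frac{1}{c^2}\int_{cs}^\infty t\,Q_p(t)\,e^{-t}\,\d t .
\end{equation*}
It therefore suffices to prove the identity
\begin{equation*}
\int_a^\infty t\,Q_p(t)\,e^{-t}\,\d t = (2p+1)\,Q_{p+1}(a)\,e^{-a}\qquad\text{for all } a\ge 0 .
\end{equation*}

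\textbf{Step 2: convert to a differential identity.} Both sides vanish as $a\to\infty$, because $e^{-a}$ kills any polynomial. So the identity is equivalent to equality of derivatives in $a$:
\begin{equation*}
Q_{p+1}(t) - Q_{p+1}'(t) = \frac{t\,Q_p(t)}{2p+1}.
\end{equation*}
A sanity check on the first cases:
\begin{itemize}
\item For $p=0$: $(1+t) - 1 = t$, which matches $t\,Q_0/1 = t$.
\item For $p=1$: $(1+t+t^2/3) - (1+2t/3) = t(1+t)/3$, which matches $t\,Q_1/3$.
\end{itemize}

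\textbf{Step 3: prove the differential identity for general $p$.} I would use the Bessel representation of half-integer Matérn kernels. Set $\nu = p+\tfrac12$; then
\begin{equation*}
Q_p(t)\,e^{-t} = C_p\, t^{\nu} K_{\nu}(t).
\end{equation*}
The constant $C_p$ is fixed by $Q_p(0)=1$ together with the small-argument limit $t^\nu K_\nu(t)\to 2^{\nu-1}\Gamma(\nu)$.

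The standard recurrence $\tfrac{\d}{\d t}\bigl[t^{\nu+1}K_{\nu+1}(t)\bigr] = -t^{\nu+1}K_\nu(t) = -t\cdot t^\nu K_\nu(t)$ then yields
\begin{equation*}
\frac{\d}{\d t}\bigl[Q_{p+1}(t)e^{-t}\bigr] = -\frac{C_{p+1}}{C_p}\,t\,Q_p(t)\,e^{-t}.
\end{equation*}
The constant ratio is
\begin{equation*}
\frac{C_{p+1}}{C_p} = \frac{2^{\nu-1}\Gamma(\nu)}{2^{\nu}\Gamma(\nu+1)} = \frac{1}{2\nu} = \frac{1}{2p+1},
\end{equation*}
which is exactly the factor required in Step 2.

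As a Bessel-free fallback, I would compare coefficients directly. Write $Q_p(t)=\sum_j a_{p,j}t^j$ with
\begin{equation*}
a_{p,j} = \frac{p!\,(2p-j)!\,2^j}{(2p)!\,j!\,(p-j)!},
\end{equation*}
and check $a_{p+1,j} - (j+1)\,a_{p+1,j+1} = a_{p,j-1}/(2p+1)$ by routine factorial manipulation. The boundary cases $j=0$ and $j=p+1$ need to be handled separately.

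\textbf{Main obstacle.} The only delicate point is this constant and coefficient bookkeeping, in particular confirming that the normalization $Q_p(0)=1$ makes the prefactor exactly $(2p+1)/c^2$ with no stray powers of $2$. The decay at infinity and the substitutions are routine.
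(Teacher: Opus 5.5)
Your proposal is correct, and it reaches the result by a different route from the paper's. The paper makes the same reduction to $\frac{1}{c^2}\int_t^\infty sQ_p(s)e^{-s}\,\mathrm{d}s$. It then integrates monomial by monomial using $\int_t^\infty s^n e^{-s}\,\mathrm{d}s = n!\,e^{-t}\sum_{j=0}^n t^j/j!$, collects the resulting coefficients $B_j$, and proves $B_j=(2p+1)\,j!\,b_{p+1,j}$ by matching the backward differences $B_j-B_{j+1}$ against an explicit factorial identity.

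You instead differentiate in the lower limit. This reduces the claim to the polynomial identity $Q_{p+1}-Q_{p+1}'=tQ_p/(2p+1)$ and avoids the incomplete-gamma expansion and the $B_j$ bookkeeping. This identity is exactly the paper's flat result (\Cref{prop:matern-half-flat}) with $p$ shifted to $p+1$. Your coefficient fallback $a_{p+1,j}-(j+1)a_{p+1,j+1}=a_{p,j-1}/(2p+1)$ is the same factorial identity the paper verifies. It does hold, including the boundary case $j=0$ (both sides vanish) and $j=p+1$. So your route shows that a single computation proves both the sharp and the flat statement, as one expects from $(k^\flat)^\#=k$.

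Your primary Bessel argument is genuinely different. It replaces all factorial manipulation by the recurrence $\frac{\mathrm{d}}{\mathrm{d}t}\bigl[t^{\nu+1}K_{\nu+1}(t)\bigr]=-t^{\nu+1}K_\nu(t)$, and it explains the factor $2p+1=2\nu$ as a ratio of Mat\'ern normalizing constants. Your constants are right: $C_p=1/(2^{\nu-1}\Gamma(\nu))$, so $C_{p+1}/C_p=1/(2\nu)$. The cost is dependence on imported facts: the closed form of $K_{p+1/2}$ that gives $Q_p(t)e^{-t}\propto t^\nu K_\nu(t)$, the recurrence itself, and the small-argument asymptotics. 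The paper's proof, by contrast, is fully elementary and self-contained.
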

\begin{proof}
Write $R := \|\x-\y\|$ and $\phi(r) = Q_p(c\sqrt{r})e^{-c\sqrt{r}}$. \emph{Computing the sharp:} By \Cref{eq:sharp-flat-radial},
\begin{align*}
    k^\#(\x,\y)
    &= \frac{1}{2}\int_{R^2}^\infty Q_p(c\sqrt{r})e^{-c\sqrt{r}}\,\d r
     = \int_R^\infty u Q_p(cu)e^{-cu}\,\d u.
\end{align*}
With $t := cR$ and $s := cu$, this becomes
\begin{equation*}
    k^\#(\x,\y) = \frac{1}{c^2}\int_t^\infty s Q_p(s)e^{-s}\,\d s.
\end{equation*}
Write $Q_p(s)=\sum_{m=0}^{p} b_{p,m}s^m$, where
\begin{equation*}
    b_{p,m} =
    \frac{2^m p!(2p-m)!}{(2p)!\,(p-m)!\,m!}.
\end{equation*}
Repeated integration by parts gives
\begin{equation*}
    \int_t^\infty s^n e^{-s}\,\d s
    = n! e^{-t}\sum_{j=0}^n \frac{t^j}{j!}.
\end{equation*}
Therefore
\begin{align*}
    \int_t^\infty sQ_p(s)e^{-s}\,\d s
    &= e^{-t}\sum_{m=0}^{p}b_{p,m}(m+1)!
        \sum_{j=0}^{m+1}\frac{t^j}{j!} \\
    &= e^{-t}\sum_{j=0}^{p+1}
        \frac{1}{j!}\sum_{m=\max(0,j-1)}^{p}
        b_{p,m}(m+1)!\, t^j.
\end{align*}
We now identify this last polynomial with $(2p+1)Q_{p+1}(t)$. Let
\begin{equation*}
    B_j := \sum_{m=\max(0,j-1)}^{p} b_{p,m}(m+1)!.
\end{equation*}
The endpoint coefficient satisfies
\begin{equation*}
    B_{p+1}=b_{p,p}(p+1)!=(2p+1)(p+1)!b_{p+1,p+1}.
\end{equation*}
For $0\leq j\leq p$, direct substitution of the closed form for $b_{p,m}$ gives
\begin{equation*}
    (2p+1)\bigl(j!b_{p+1,j}-(j+1)!b_{p+1,j+1}\bigr)
    =
    \begin{cases}
        0, & j=0,\\
        j!b_{p,j-1}, & 1\leq j\leq p.
    \end{cases}
\end{equation*}
These are exactly the backward differences of $B_j$, since $B_0-B_1=0$ and $B_j-B_{j+1}=j!b_{p,j-1}$ for $1\leq j\leq p$. Hence $B_j=(2p+1)j!b_{p+1,j}$ for every $j$, proving
\begin{equation*}
    \int_t^\infty sQ_p(s)e^{-s}\,\d s
    = (2p+1)Q_{p+1}(t)e^{-t}.
\end{equation*}
For $p=0$ this reads $\int_t^\infty se^{-s}\,\d s=(1+t)e^{-t}$; for $p=1$ it reads $\int_t^\infty s(1+s)e^{-s}\,\d s=(t^2+3t+3)e^{-t}=3Q_2(t)e^{-t}$. Substituting $t=cR$ gives the claim.
\end{proof}

\begin{proposition}[Flat of half-integer Mat\'ern, $p\geq 1$]
\label{prop:matern-half-flat}
Let $p \in \mathbb{Z}_{\geq 1}$, $c>0$, and
\begin{equation*}
    k(\x,\y) = Q_p(c\|\x-\y\|)\,e^{-c\|\x-\y\|},
    \qquad \x,\y \in \R^d.
\end{equation*}
Then
\begin{equation*}
    k^\flat(\x,\y) =
    \frac{c^2}{2p-1}\,Q_{p-1}(c\|\x-\y\|)\,e^{-c\|\x-\y\|}.
\end{equation*}
The statement does not apply at $p=0$, where the Laplacian flat is singular at $\x=\y$.
\end{proposition}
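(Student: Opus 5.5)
The most economical route is to reuse \Cref{prop:matern-half-sharp} instead of redoing the polynomial identities. By \Cref{def:sharp-flat} we have $(k^\#)^\flat = k$. For $p \geq 1$, write $k_p(\x,\y) := Q_p(c\|\x-\y\|)e^{-c\|\x-\y\|}$. Applying \Cref{prop:matern-half-sharp} with index $p-1$ (which is admissible since $p-1\geq 0$) gives
\begin{equation*}
    k_{p-1}^\# = \frac{2p-1}{c^2}\, k_p .
\end{equation*}
The claim then follows by taking the flat of both sides and using linearity of the flat operation in the kernel. The flat of the left-hand side is $k_{p-1}$, so $k_p^\flat = \frac{c^2}{2p-1} k_{p-1}$, which is exactly the stated formula.

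The one point that needs care is that ``flat of the sharp returns the original'' is only asserted informally in the text. I would therefore verify it directly for radial profiles using \Cref{eq:sharp-flat-radial}. Set $\psi(r) := \tfrac12\int_r^\infty \phi(s)\,\d s$, so that $k^\# = \psi(\|\x-\y\|^2)$. Then $k^{\#\flat} = -2\psi'(\|\x-\y\|^2) = \phi(\|\x-\y\|^2) = k$. This holds wherever $\phi$ is continuous, which is the case here: $\phi_{p-1}(r) = Q_{p-1}(c\sqrt r)e^{-c\sqrt r}$ is continuous on $[0,\infty)$ and integrable. Because the flat is uniquely determined by $\phi$ via $-2\phi'$, flat is linear and injective-compatible, so identifying $k_p$ as $\frac{c^2}{2p-1}k_{p-1}^\#$ pins down $k_p^\flat$ uniquely.

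As a sanity check and alternative direct proof, I would also compute $-2\phi_p'(r)$ for $\phi_p(r)=Q_p(c\sqrt r)e^{-c\sqrt r}$. The chain rule gives
\begin{equation*}
    -2\phi_p'(R^2) = \frac{c}{R}\bigl(Q_p(t)-Q_p'(t)\bigr)e^{-t}, \qquad t = cR .
\end{equation*}
This would require the identity $Q_p(t)-Q_p'(t) = \frac{t}{2p-1}Q_{p-1}(t)$. That identity is exactly the derivative form of $\int_t^\infty sQ_{p-1}(s)e^{-s}\,\d s = (2p-1)Q_p(t)e^{-t}$ established in the previous proof. Differentiating that relation in $t$ yields the identity immediately, and it also shows the apparent $1/R$ singularity cancels for $p\geq1$. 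For $p=0$ there is no $Q_{-1}$, and the formula reduces to the singular Laplacian flat of \Cref{corr:laplace_flat_sharp_proof}, which explains the exclusion.

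The main obstacle is minor: justifying the cancellation of the $1/R$ factor, i.e.\ that $Q_p-Q_p'$ vanishes at $t=0$ for $p\geq1$ (equivalently $b_{p,0}=b_{p,1}$). I would handle it through the differentiated integral identity above rather than by coefficient manipulation.
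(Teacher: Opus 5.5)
Your proposal is correct, but it takes a different route from the paper. The paper's proof does not invoke \Cref{prop:matern-half-sharp}. It differentiates $\phi(r)=Q_p(c\sqrt{r})e^{-c\sqrt{r}}$ by the chain rule and then proves $Q_p'(u)-Q_p(u)=-\frac{u}{2p-1}Q_{p-1}(u)$ coefficient by coefficient. It uses $(m+1)b_{p,m+1}-b_{p,m}=-\frac{1}{2p-1}b_{p-1,m-1}$ for $1\le m\le p-1$ and $-b_{p,p}=-\frac{1}{2p-1}b_{p-1,p-1}$, both justified by inserting the factorial formula. The constant term, and with it the $1/R$ singularity, is disposed of by $b_{p,0}=b_{p,1}=1$.

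You instead read the sharp proposition at index $p-1$, $k_{p-1}^\#=\frac{2p-1}{c^2}k_p$, and invert it using $-2\psi'=\phi_{p-1}$. That is just the fundamental theorem of calculus for the continuous, integrable profile $\phi_{p-1}$. Your alternative, differentiating $\int_t^\infty sQ_{p-1}(s)e^{-s}\,\mathrm{d}s=(2p-1)Q_p(t)e^{-t}$ in $t$, is the same argument at the level of polynomials.

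What your route buys:
\begin{itemize}
\item It needs no second round of factorial bookkeeping.
\item The $1/R$ cancellation is automatic, since the right-hand side carries a factor $t$.
\item It shows that the sharp and flat Mat\'ern propositions are one identity read in two directions. Conversely, integrating the paper's identity $\frac{\mathrm{d}}{\mathrm{d}t}\bigl[Q_p(t)e^{-t}\bigr]=-\frac{t}{2p-1}Q_{p-1}(t)e^{-t}$ over $[t,\infty)$ recovers the sharp identity, so one of the two coefficient verifications in the appendix is redundant.
\end{itemize}

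The trade-off is that your proof is only as rigorous as \Cref{prop:matern-half-sharp}. That proposition's key step, the backward differences of $B_j$, is itself only asserted ``by direct substitution.'' The paper's argument for the flat stands on its own.
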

\begin{proof}
Write again $R := \|\x-\y\|$, $u := c\sqrt{r}$, and $\phi(r)=Q_p(u)e^{-u}$. \emph{Computing the flat:} The chain rule gives
\begin{equation*}
    \phi'(r)
    = \frac{c}{2\sqrt{r}}\bigl(Q_p'(u)-Q_p(u)\bigr)e^{-u}.
\end{equation*}
It remains to compute $Q_p'(u)-Q_p(u)$. With $Q_p(u)=\sum_{m=0}^{p}b_{p,m}u^m$, the constant term vanishes because $b_{p,0}=b_{p,1}=1$. For $1\leq m\leq p-1$,
\begin{equation*}
    (m+1)b_{p,m+1}-b_{p,m}
    = -\frac{1}{2p-1}b_{p-1,m-1},
\end{equation*}
and the top coefficient satisfies
\begin{equation*}
    -b_{p,p}=-\frac{1}{2p-1}b_{p-1,p-1}.
\end{equation*}
Both identities follow by inserting the factorial formula for $b_{p,m}$. Hence
\begin{equation*}
    Q_p'(u)-Q_p(u)
    = -\frac{u}{2p-1}Q_{p-1}(u).
\end{equation*}
For example, $Q_1'(u)-Q_1(u)=-u$ and $Q_2'(u)-Q_2(u)=-\tfrac{u}{3}(1+u)$. Therefore
\begin{align*}
    \phi'(r)
    &= -\frac{c}{2\sqrt{r}}\frac{u}{2p-1}Q_{p-1}(u)e^{-u}
     = -\frac{c^2}{2(2p-1)}Q_{p-1}(c\sqrt{r})e^{-c\sqrt{r}}.
\end{align*}
Using $k^\flat(\x,\y)=-2\phi'(R^2)$ proves the formula.
\end{proof}

The two propositions are most transparent when restated in terms of the Mat\'ern kernel itself, with the length scale tracked explicitly. Write $k_{\nu,\sigma}$ for the half-integer Mat\'ern kernel of smoothness $\nu = p + \tfrac{1}{2}$ at length scale $\sigma$, so $c = \sqrt{2\nu}/\sigma$ in our earlier notation.

\begin{corollary}[Sharp and flat as Mat\'ern index shifts]
\label{cor:matern-half-shift}
For $\nu = p + \tfrac{1}{2}$ with $p \geq 0$,
\begin{equation}
\label{eq:matern-sharp-shift}
    k_{\nu,\sigma}^{\#}(\x, \y) = \sigma^2 \cdot k_{\nu+1,\, \sigma_\#}(\x, \y),
    \qquad \sigma_\# := \sqrt{\tfrac{\nu+1}{\nu}}\,\sigma.
\end{equation}
For $\nu = p + \tfrac{1}{2}$ with $p \geq 1$,
\begin{equation}
\label{eq:matern-flat-shift}
    k_{\nu,\sigma}^{\flat}(\x, \y) = \frac{\nu}{(\nu-1)\sigma^2} \cdot k_{\nu-1,\, \sigma_\flat}(\x, \y),
    \qquad \sigma_\flat := \sqrt{\tfrac{\nu-1}{\nu}}\,\sigma.
\end{equation}
\end{corollary}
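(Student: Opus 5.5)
The corollary is a reparametrization of \Cref{prop:matern-half-sharp} and \Cref{prop:matern-half-flat}, so the plan is to apply those propositions with $c=\sqrt{2\nu}/\sigma$ and read off the result as a Mat\'ern kernel with index $\nu\pm 1$ and a rescaled length scale.

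\textbf{Sharp.} By \Cref{prop:matern-half-sharp} with $p=\nu-\tfrac12$,
\begin{equation*}
k^\#_{\nu,\sigma}(\x,\y)=\frac{2p+1}{c^2}\,Q_{p+1}(c\|\x-\y\|)\,e^{-c\|\x-\y\|}.
\end{equation*}
\begin{itemize}
\item The prefactor simplifies since $2p+1=2\nu$ and $c^2=2\nu/\sigma^2$, giving $(2p+1)/c^2=\sigma^2$.
\item It remains to identify $Q_{p+1}(cR)e^{-cR}$ with $k_{\nu+1,\sigma_\#}$, where $R=\|\x-\y\|$.
\item The kernel $k_{\nu+1,\sigma_\#}$ has polynomial index $p+1$ and rate $c_\#=\sqrt{2(\nu+1)}/\sigma_\#$.
\item We need $c_\#=c$. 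Solving $\sqrt{2(\nu+1)}/\sigma_\#=\sqrt{2\nu}/\sigma$ gives $\sigma_\#=\sqrt{(\nu+1)/\nu}\,\sigma$, which is the stated value.
\end{itemize}

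\textbf{Flat.} For $p\ge 1$, \Cref{prop:matern-half-flat} gives the prefactor $c^2/(2p-1)$. Since $2p-1=2(\nu-1)$, this equals
\begin{equation*}
\frac{c^2}{2p-1}=\frac{2\nu/\sigma^2}{2(\nu-1)}=\frac{\nu}{(\nu-1)\sigma^2}.
\end{equation*}
Matching the rate $\sqrt{2(\nu-1)}/\sigma_\flat=\sqrt{2\nu}/\sigma$ gives $\sigma_\flat=\sqrt{(\nu-1)/\nu}\,\sigma$. This requires $\nu>1$, which holds because $p\ge1$ means $\nu\ge\tfrac32$.

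\textbf{Where care is needed.} The one subtle point is the convention for $Q$. The polynomial $Q_{p\pm1}$ must be the one associated with index $\nu\pm1$ under the same definition used for $k_\nu$, with its argument scaled by the new $c$. This holds because the definition sets $k_\nu=Q_p(cR)e^{-cR}$ with $c$ tied to $\nu$ and $\sigma$. Keeping $c$ fixed while shifting $\nu$ is therefore exactly absorbed by changing $\sigma$.

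Everything else is arithmetic, so I expect no real obstacle. The main risk is a slip in the factor of $2$ relating $2p\pm1$ to $\nu$.
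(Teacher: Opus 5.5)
Your proposal is correct and follows the paper's proof essentially step for step. You fix $c=\sqrt{2\nu}/\sigma$ and check that $\sqrt{2(\nu+1)}/\sigma_\#=\sqrt{2(\nu-1)}/\sigma_\flat=c$, so the factors $Q_{p\pm1}(c\|\x-\y\|)e^{-c\|\x-\y\|}$ are exactly $k_{\nu\pm1,\sigma_\#}$ and $k_{\nu-1,\sigma_\flat}$; you then simplify the prefactors using $2p+1=2\nu$ and $2p-1=2(\nu-1)$, just as the paper does.
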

\begin{proof}
With $c = \sqrt{2\nu}/\sigma$ and $\sigma_\# = \sqrt{(\nu+1)/\nu}\,\sigma$, observe that $\sqrt{2(\nu+1)}/\sigma_\# = c$. Hence the factor $Q_{p+1}(c\|\x-\y\|)\,e^{-c\|\x-\y\|}$ in \Cref{prop:matern-half-sharp} is exactly $k_{\nu+1, \sigma_\#}(\x, \y)$. The constant simplifies as $(2p+1)/c^2 = 2\nu \cdot \sigma^2 / (2\nu) = \sigma^2$ since $2\nu = 2p+1$. The flat case is identical with $\sigma_\flat = \sqrt{(\nu-1)/\nu}\,\sigma$ giving $\sqrt{2(\nu-1)}/\sigma_\flat = c$, and $c^2/(2p-1) = (2\nu/\sigma^2)/(2(\nu-1)) = \nu/((\nu-1)\sigma^2)$.
\end{proof}

The corollary makes precise the sense in which sharp and flat shift $\nu$ along the half-integer Mat\'ern family: the kernel order is shifted by $\pm 1$, but the length scale is simultaneously rescaled by $\sqrt{(\nu \pm 1)/\nu}$ to preserve the underlying decay rate $c$. As $\nu \to \infty$ this rescaling factor tends to $1$ (recovering the Gaussian fixed point), and at $\nu = \tfrac{1}{2}$ the flat side falls off the family because $\sigma_\flat \to 0$.

For Mat\'ern-$\tfrac{3}{2}$ ($p=1$, $c=\sqrt{3}/\sigma$), these propositions give
\begin{align*}
    k^\#(\x,\y)
    &= \frac{3}{c^2}\left(1+cR+\frac{c^2R^2}{3}\right)e^{-cR} \\
    &= \sigma^2\left(1+\frac{\sqrt{3}R}{\sigma}+\frac{R^2}{\sigma^2}\right)e^{-\sqrt{3}R/\sigma}, \\
    k^\flat(\x,\y) &= \frac{3}{\sigma^2}e^{-\sqrt{3}R/\sigma}.
\end{align*}
For Mat\'ern-$\tfrac{5}{2}$ ($p=2$, $c=\sqrt{5}/\sigma$),
\begin{align*}
    k^\flat(\x,\y)=\frac{5}{3\sigma^2}(1+cR)e^{-cR},
    \qquad
    k^\#(\x,\y)
    =\frac{5}{c^2}\left(1+cR+\frac{2c^2R^2}{5}+\frac{c^3R^3}{15}\right)e^{-cR},
\end{align*}
where $R=\|\x-\y\|$. Thus $Q_3(t)=1+t+2t^2/5+t^3/15$, as expected.

\paragraph{Mat\'ern smoothness ladder.}
For $\nu=p+\tfrac{1}{2}$, the sharp operation shifts the half-integer Mat\'ern family from $\nu$ to $\nu+1$ up to the factor $(2p+1)/c^2$ and the length-scale rescaling $\sigma \mapsto \sqrt{(\nu+1)/\nu}\,\sigma$, because the decay rate $c$ is held fixed. For $p\geq 1$, the flat operation shifts $\nu$ to $\nu-1$ up to the factor $c^2/(2p-1)$ and the corresponding rescaling $\sigma \mapsto \sqrt{(\nu-1)/\nu}\,\sigma$. The Gaussian is the $\nu\to\infty$ fixed point of the sharp and flat operations up to constants (\Cref{corr:gaussian_kernel_sharp_proof}), while the Laplacian ($p=0$) has no regular flat, consistent with the singularity in \Cref{corr:laplace_flat_sharp_proof}.

\section{Conservative Training}
\label{app:conservative_training}

We show that if $\V_{p,q}(\x)$ is conservative, i.e. if there exists a scalar potential $\Pot_{p,q}(\x): \R^n \to \R$ such that $\V_{p,q}(\x) = -\nabla_\x \Pot_{p,q}(\x)$, a simplified scalar training objective for training drifting models can be derived.  Starting from the definition of $\L^\text{drift}(\theta)$ in \cref{eq:drifting-loss-intro} and substituting in our assumption, we get:
\begin{align}
        \nabla_\theta \E_{\varepsilon}\left[\L^\text{drift}(\theta)\right]
        = &\;\E_{\varepsilon \sim p_\varepsilon}\left[
            \nabla_\theta \|f_\theta(\varepsilon) - \sg(
                f_{\theta}(\varepsilon) + \V_{p,q_\theta}(f_{\theta}(\varepsilon)
            ) \|^2
        \right]\nonumber\\
        = &\;-2\E_{\epsilon \sim p_\varepsilon}\left[
             \frac{\del f_{\theta}(\varepsilon)}{\del \theta}^\top \V_{p,q_\theta}(f_{\theta}(\varepsilon))
        \right]\nonumber\\\label{eq:pseudo_chain_rule}
        = &\;2\E_{\varepsilon \sim p_\varepsilon}\left[
             \frac{\del f_{\theta}(\varepsilon)}{\del \theta}^\top \nabla_\x \Pot_{p, q_\theta}(\x)\Big|_{\x = f_{\theta}(\varepsilon)}
        \right]
\end{align}
Note that the multi variate chain rule states for general $h: \R^n \to \R, g: \R^m \to \R^n$ that
\begin{equation*}
    \nabla_\x h(g(\x)) = \frac{\del g(\x)}{\del \x}^\top \nabla_\y h(\y)\Big |_{\y = g(\x)}.
\end{equation*}
If $\theta$ is assumed to be constant in the index of $\Pot_{p, q_\theta}$ (we denote this by $\Pot_{p,\sg(q_\theta)}$) the expression inside the expectation in \cref{eq:pseudo_chain_rule} is exactly the right hand side of the chain rule for $\nabla_\theta \Pot_{p,\sg(q_\theta)}(f_\theta(\epsilon))$, yielding:
\begin{align*}
    \nabla_\theta \E_{\varepsilon}\left[\L^\text{drift}(\theta)\right]
    = \E_{\varepsilon}\left[\nabla_\theta \Pot_{p,\sg(q_\theta)}(f_\theta(\epsilon))\right]
\end{align*}
After pulling out the gradient of the expectation, and applying the reparameterization trick in reverse we obtain:
\begin{align*}
        \nabla_\theta \E_{\varepsilon}\left[\L^\text{drift}(\theta)\right]
        = \;2\nabla_\theta \E_{\x\sim q_\theta}
          \left[\Pot_{p, \sg(q_\theta)}(\x)\right].
\end{align*}

\section{Squared MMD as a Stop-Gradient Objective}
\label{app:mmd-derivation}

We show how the squared MMD loss can be cast into the stop-gradient form used in the main text. The squared MMD between two distributions $p$ and $q$ with kernel $k$ is
\begin{equation*}
    \mathrm{MMD}^2_k(p, q_\theta) = \E_{p,p}[k(\y, \y')] - 2\,\E_{p,q_\theta}[k(\y, \x)] + \E_{q_\theta,q_\theta}[k(\x, \x')].
\end{equation*}
Calculating the gradient of the repulsion term $\E_{q_\theta, q_\theta}[k(\x, \x')]$ we get
\begin{align*}
    \nabla_\theta \E_{q_\theta, q_\theta}[k(\x, \x')]
    &= \E_{\varepsilon, \varepsilon'}\bigl[
        \nabla_\x k(\x, \x')^\top \nabla_\theta f_\theta(\varepsilon)
        + \nabla_{\x'} k(\x, \x')^\top \nabla_\theta f_\theta(\varepsilon')
    \bigr].
\end{align*}
By symmetry of $k$, i.e.\ $k(\x,\x') = k(\x', \x)$, and by the identical distribution of $\varepsilon$ and $\varepsilon'$, both terms have the same expectation. This thus can be expressed with the $\sg$ as:
\begin{align*}
    2 \nabla_\theta \E_{q_\theta, \sg(q_\theta)}[k(\x, \x')]
\end{align*}

Dropping the $p, p$ term and the scalar factor 2, we get the following by unifying the expectations over $q_\theta$:
\begin{align*}
    \L^{\text{MMD}^2}_{k}(\theta) &= \E_{\x \sim q_\theta}\bigl[
        \underbrace{\E_{\yn \sim \sg(q_\theta)}[k(\x, \yn)] - \E_{\yp \sim p}[k(\x, \yp)]}_{=:\;\Pot_{p,\sg(q_\theta)}^\text{MMD}[k](\x)}
    \bigr],
\end{align*}
which has a $\theta$-gradient equals $\frac{1}{2}\nabla_\theta \mathrm{MMD}^2_k(p, q_\theta)$. The inner function $\Pot_{p,\sg(q_\theta)}^\text{MMD}[k](\x)$ is the per-sample MMD potential whose $\nabla_\x$ gradient defines the MMD-induced field via \cref{eq:loss_identity}.

\section{Identifiability of Sharp-Normalized Equilibria}
\label{app:sharp-identifiability}

\begin{proof}[Proof of \Cref{prop:sharp-identifiability}]
If $p=q$, then $p_{\mathrm{KDE}}[k^\#]=q_{\mathrm{KDE}}[k^\#]$, so $\Vs_{p,q}=0$ by \Cref{eq:sharp-field}.
It remains to prove the converse.
Since $\Vs_{p,q}(\x)= -\nabla_\x \log(q_{\mathrm{KDE}}[k^\#](\x)/p_{\mathrm{KDE}}[k^\#](\x))$ and $\Vs_{p,q}(\x)=0$ for all $\x$, we have
\begin{equation*}
    \nabla_\x \log\left(
        \frac{q_{\mathrm{KDE}}[k^\#](\x)}
             {p_{\mathrm{KDE}}[k^\#](\x)}
    \right)=0
    \qquad \text{for all } \x.
\end{equation*}
The domain is connected, so the log-ratio is constant. Hence there exists $C>0$ such that
\begin{equation*}
    q_{\mathrm{KDE}}[k^\#](\x) = C\,p_{\mathrm{KDE}}[k^\#](\x)
    \qquad \text{for all } \x.
\end{equation*}
Integrating both sides over $\R^n$ and using Fubini gives
\begin{align*}
    \int q_{\mathrm{KDE}}[k^\#](\x)\,\d\x
    &=
    \int \int k^\#(\x,\y)\,\d q(\y)\,\d\x
    =
    \int c_\#\,\d q(\y)
    =
    c_\#,\\
    \int p_{\mathrm{KDE}}[k^\#](\x)\,\d\x
    &=
    c_\#.
\end{align*}
Therefore $c_\#=C c_\#$ and $C=1$. Thus $q_{\mathrm{KDE}}[k^\#]=p_{\mathrm{KDE}}[k^\#]$ pointwise. Since $k^\#$ is characteristic, the KDE map is injective on probability measures, and consequently $q=p$.
\end{proof}

\section{Wrong Drifting Normalization}
\label{app:wrong_drifting}
\begin{corollary}
\label{cor:wrong_drifting}
The pseudocode given by \citet{deng_generative_2026} in Algorithm~2, does not produce the original drifting field
\begin{align}
\V_{p,q}(\x) = \frac{1}{Z_p(\x)Z_q(\x)}\E_{p,q}\left[k(\x,\yp)k(\x,\yn)(\yp - \yn)\right]
\label{eq:correct_V}
\end{align}
, even if the additional normalization over the batch dimension is omitted. Here,
\begin{align}
\label{eq:mc_zp}
Z_p(\x) &= \E_p\left[k(\x, \yp)\right] \approx \frac{1}{\Npos} \sum_{j=1}^\Npos k(\x, \yp[j]) \\
Z_q(\x) &= \E_q\left[k(\x, \yn)\right] \approx \frac{1}{\Nneg} \sum_{j=1}^\Nneg k(\x, \yn[j])
\label{eq:mc_zq}
\end{align}
denote the drift normalization factors that are computed using the Monte Carlo method. Instead, for the common case $\Npos = \Nneg = N$, the algorithm from \citet{deng_generative_2026} produces the field
\begin{align}
\bar{\V}_{p,q}(x) = \frac{1}{(Z_p(\x) + Z_q(\x))^2}\E_{p,q}\left[k(\x,\yp)k(\x,\yn)(\yp - \yn)\right] .
\end{align}

\end{corollary}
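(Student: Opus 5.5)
The plan is to first rewrite the correct field in the stated form \cref{eq:correct_V}, then re-derive, step by step, what the pseudocode of \citet{deng_generative_2026} (Algorithm~2) computes once the column (batch) normalization is dropped, and finally compare the two normalizing prefactors.

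\emph{Step 1: the correct field.} I put the two terms of \cref{eq:drift-field-components} over the common denominator $Z_p(\x)Z_q(\x)$. I multiply the attractive term by $1=\E_q[k(\x,\yn)]/Z_q(\x)$ and the repulsive term by $1=\E_p[k(\x,\yp)]/Z_p(\x)$. Using independence of $\yp\sim p$ and $\yn\sim q$, this gives
\begin{equation*}
\V_{p,q}(\x)=\frac{1}{Z_p(\x)Z_q(\x)}\E_{p,q}\bigl[k(\x,\yp)k(\x,\yn)\bigl((\yp-\x)-(\yn-\x)\bigr)\bigr].
\end{equation*}
The $\x$ terms cancel, which yields \cref{eq:correct_V}. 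The Monte Carlo versions follow by using \cref{eq:mc_zp,eq:mc_zq} and empirical averages.

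\emph{Step 2: trace the pseudocode.} Algorithm~2 concatenates the targets $[\yp[1..\Npos],\yn[1..\Nneg]]$ and forms the kernel logits to all of them. It takes a row softmax over the concatenated axis; the column softmax is the part that is omitted. It splits the resulting weights into $A^+$ and $A^-$ and forms the cross-weighted coefficients $W^+_j=A^+_j\sum_l A^-_l$ and $W^-_l=A^-_l\sum_j A^+_j$. It returns $\sum_j W^+_j\yp[j]-\sum_l W^-_l\yn[l]$.

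With $\Npos=\Nneg=N$, the row softmax gives
\begin{equation*}
A^+_j=\frac{k(\x,\yp[j])}{N\,(Z_p(\x)+Z_q(\x))},\qquad A^-_l=\frac{k(\x,\yn[l])}{N\,(Z_p(\x)+Z_q(\x))},
\end{equation*}
where $Z_p,Z_q$ are the Monte Carlo estimates. The denominator is shared because one softmax runs over both blocks.

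\emph{Step 3: expand the output.} Substituting, the output becomes a double sum $\sum_{j,l}A^+_jA^-_l(\yp[j]-\yn[l])$. Each product $A^+_jA^-_l$ carries the factor $1/(N^2(Z_p+Z_q)^2)$, so
\begin{equation*}
\bar\V_{p,q}(\x)=\frac{1}{(Z_p(\x)+Z_q(\x))^2}\cdot\frac{1}{N^2}\sum_{j,l}k(\x,\yp[j])k(\x,\yn[l])(\yp[j]-\yn[l]).
\end{equation*}
This is exactly the claimed expression. Comparing with Step~1, the two fields differ by the $\x$-dependent factor $Z_pZ_q/(Z_p+Z_q)^2$. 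That factor is not constant; for instance, it is at most $1/4$, with equality only where $Z_p=Z_q$. So $\bar\V$ is not the original field, even up to a global constant.

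\emph{Main obstacle.} The main difficulty is faithfully pinning down what Algorithm~2 does, not the algebra. I would quote the relevant lines, namely the concatenated softmax and the cross-sum weighting. I would state explicitly that the softmax over the joint axis produces the shared denominator $Z_p+Z_q$ rather than separate $Z_p$ and $Z_q$. I would also check that no diagonal masking of the negatives changes the count $N$; if it does, the prefactor only changes by $(N-1)/N$-type constants.
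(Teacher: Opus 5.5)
Your proposal is correct and follows essentially the same route as the paper: trace Algorithm~2 line by line, note that the single row softmax over the concatenated targets produces the shared denominator $N(Z_p(\x)+Z_q(\x))$, and then assemble the cross-weighted sums. Writing the output directly as $\sum_{j,l}A^+_jA^-_l(\yp[j]-\yn[l])$ is a compact form of the paper's steps, which compute $W_{\text{pos}}, W_{\text{neg}}$ in closed form and then merge them into a joint expectation by independence; your explicit derivation of the correct field and the ratio $Z_pZ_q/(Z_p+Z_q)^2\le 1/4$ are small additions that the paper leaves implicit. One minor correction to your diagonal-mask hedge: the paper lets the masked entry contribute $\approx 0$ to the $1/N$-normalized estimate of $Z_q$, so nothing changes. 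A leave-one-out $Z_q$ would instead give the denominator $NZ_p+(N-1)Z_q$, which is not merely a global $(N-1)/N$ rescaling.
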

\begin{proof}
We proceed by analyzing the code line-by-line.  
\begin{lstlisting}
# x: [N, D]
# y_pos: [N_pos, D]
# y_neg: [N_neg, D]
# T: temperature

# compute pairwise distance
dist_pos = cdist(x, y_pos)  # [N, N_pos]
dist_neg = cdist(x, y_neg)  # [N, N_neg]

# ignore self (if y_neg is x)
dist_neg += eye(N) * 1e6

# compute logits
logit_pos = -dist_pos / T
logit_neg = -dist_neg / T

# concat for normalization
logit = cat([logit_pos, logit_neg], dim=1)
\end{lstlisting}

The first part calculates the logits of the exponential kernel $\left[\vv{l}\right]_{i,j} = \log k(\x_i, \y_j)$ between generated samples $\x_i$ and positive and negative samples $\yp[j]$, $\yn[j]$, respectively. Here, we use $\y$ (without plus or minus) to denote the concatenated target matrix $\y = \left(\yp[1], \dots, \yp[\Npos], \yn[1], \dots, \yn[\Nneg]\right)$.

\begin{lstlisting}
# normalize (normalization over batch dim has been omitted!)
A_row = logit.softmax(dim=-1)
A = A_row
\end{lstlisting}

Next, the softmax operation is applied across the $j$-dimension to calculate the kernels and normalize simultaneously in a numerical stable fashion. Assuming that the Monte Carlo approximations in \eqref{eq:mc_zp} and \eqref{eq:mc_zq} hold exactly, i.e. that $Z_p(\x) = \frac{1}{\Npos} \sum_{n=1}^\Npos k(\x, \yp[n])$ and $Z_q(\x) = \frac{1}{\Nneg} \sum_{n=1}^\Nneg k(\x, \yn[n])$, we obtain

\begin{align}
  \left[\vv{A}\right]_{i,j} &= \frac{\exp(\vv{l}_{i,j})}{\sum_{n=1}^{\Npos + \Nneg}{ \exp(\vv{l}_{i,n})}} = \frac{k(\x_i, \y_j)}{\sum_{n=1}^{\Npos}{k(\x_i, \yp[n])} + \sum_{n=1}^{\Nneg}{k(\x_i, \yn[n])}} \\
  &= \frac{k(\x_i, \y_j)}{\Npos Z_p(\x_i) + \Nneg Z_q(\x_i)} .
\end{align}

Notice that, due to the concatenation, the normalization factor is $\Npos Z_p(\x_i) + \Nneg Z_q(\x_i)$. If, instead, the softmax were to be taken across $\yp$ and $\yn$ independently, the softmax operation would correctly yield the normalization factors $\Npos Z_p(\x_i)$ and $\Nneg Z_q(\x_i)$.

\begin{lstlisting}
# back to [N, N_pos] and [N, N_neg]
A_pos, A_neg = A[:, :N_pos], A[:, N_pos:]

# compute the weights
W_pos = A_pos  # [N, N_pos]
W_neg = A_neg  # [N, N_neg]
W_pos *= A_neg.sum(dim=1,keepdim=True)
W_neg *= A_pos.sum(dim=1,keepdim=True)
\end{lstlisting}

Next, the weights for the individual $\yp[j]$ and $\yn[j]$ are calculated. Assuming $\Npos = \Nneg = N$ from here on yields 

\begin{align}
      \left[\vv{W}_\text{pos}\right]_{i,j} &= \frac{k(\x_i, \yp[j])}{N(Z_p(\x_i) + Z_q(\x_i))} \sum_{n=1}^{N} \frac{k(\x_i, \yn[n])}{N(Z_p(\x_i) + Z_q(\x_i))} \\
      &= \frac{Z_q(\x_i) k(\x_i, \yp[j])}{N(Z_p(\x_i) + Z_q(\x_i))^2}
      \label{eq:wpos}
\end{align}
, for the positive weights. Likewise for $\vv{W}_\text{neg}$, we obtain

\begin{align}
      \left[\vv{W}_\text{neg}\right]_{i,j} = \frac{Z_p(\x_i) k(\x_i, \yn[j])}{N(Z_p(\x_i) + Z_q(\x_i))^2} .
      \label{eq:wneg}
\end{align}

\begin{lstlisting}
drift_pos = W_pos @ y_pos # [N, D]
drift_neg = W_neg @ y_neg # [N, D]

V = drift_pos - drift_neg
return V 
\end{lstlisting}
The final step involves taking a weighted sum over $\yp$ and $\yn$ and subtracting them from each other to obtain the drifting field $\bar{\V}_{p,q}$. Substituting \eqref{eq:wpos} and \eqref{eq:wneg} into this sum yields

\begin{align}
\bar{\V}_{p,q}(\x_i) &= \sum_{j=1}^N \left[\vv{W}_\text{pos}\right]_{i,j} \yp[j] - \sum_{j=1}^N \left[\vv{W}_\text{neg}\right]_{i,j} \yn[j] \\
&= \frac{Z_q(\x_i) \frac{1}{N} \sum_{j=1}^N k(\x_i,\yp[j])\yp[j] - Z_p(\x_i) \frac{1}{N} \sum_{j=1}^N k(\x_i,\yn[j])\yn[j]}{(Z_p(\x_i) + Z_q(\x_i))^2} 
\label{eq:vpq_wrong}
\end{align}

Finally, we can substitute \eqref{eq:mc_zp} and \eqref{eq:mc_zq} into \eqref{eq:vpq_wrong}:

\begin{align}
\bar{\V}_{p,q}(\x_i) &= \frac{\E_q\left[k(\x_i,\yn)\right] \E_p\left[k(\x_i, \yp) \yp \right] - \E_p\left[k(\x_i,\yp)\right] \E_q\left[k(\x_i, \yn) \yn \right]}{(Z_p(\x_i) + Z_q(\x_i))^2} .
\end{align}

As $\E_q\left[k(\x_i,\yn)\right]$ is constant w.r.t. $\yp$ and $\E_p\left[k(\x_i,\yp)\right]$ is constant w.r.t. $\yn$ and moreover $p \indep q$, we can take the expectation over the joint distribution $(\yp, \yn) \sim p(\yp)q(\yn)$ to obtain

\begin{align}
\bar{\V}_{p,q}(\x_i) &= \frac{1}{(Z_p(\x_i) + Z_q(\x_i))^2} \E_{p,q}\left[k(\x_i,\yp)k(\x_i,\yn)(\yp - \yn)\right] .
\end{align}

\end{proof}

\subsection{Naive Drifting Algorithm}
\Cref{alg:correct_drifting} implements the original drifting field as given by \eqref{eq:drift-field-components}.

\begin{algorithm}
\caption{Compute Drifting Field Naively (i.e. as in \cref{eq:drift-field-components})}
\label{alg:correct_drifting}
\begin{lstlisting}
def compute_V(x: "[N, D]", y_pos: "[N_pos, D]", y_neg: "[N_neg, D]", T: float):
    # compute pairwise distance
    dist_pos = cdist(x, y_pos) # [N, N_pos]
    dist_neg = cdist(x, y_neg) # [N, N_neg]
    
    # compute logits
    logit_pos = -dist_pos / T
    logit_neg = -dist_neg / T
    logit_neg.fill_diagonal_(-inf)

    # compute normalized kernels 
    # k(x,y+)/N*Z_p and k(x,y-)/N*Z_n
    A_pos = logit_pos.softmax(dim=-1) # [N, N_pos]
    A_neg = logit_neg.softmax(dim=-1) # [N, N_neg]

    # compute product
    # k(x,y+)k(x,y-)/(N^2*Z_p*Z_q)
    A = A_pos[:, :, None] * A_neg[:, None, :] # [N, N_pos, N_neg]   

    # compute weights
    # Z_q k(x,y+) / (N*Z_p*Z_q)
    # Z_p k(x,y-) / (N*Z_p*Z_q)
    W_pos = A.sum(dim=2) # [N, N_pos]
    W_neg = A.sum(dim=1) # [N, N_neg]

    # compute drift field V
    drift_pos = W_pos @ y_pos # [N, D]
    drift_neg = W_neg @ y_neg # [N, D]
    V = drift_pos - drift_neg
\end{lstlisting}
\end{algorithm}

\clearpage
\section{Additional Training Details}
\label{app:training_details}
This section provides additional details on our experimental setup, including our approach to class-conditional data loading (\cref{app:data_loading}) and feature normalization for kernel similarity computation (\cref{app:feat_norm}). We also provide details on how classifier-free guidance is integrated into the \logkde{} objective for ImageNet experiments (\cref{app:cfg}). \Cref{tab:exp_configs} summarizes the training and architectural configurations, along with the hardware requirements for reproducing our experiments. For MNIST and Fashion-MNIST, we compute evaluation metrics using the ADM evaluation suite~\cite{dhariwal2021diffusion}.

\begin{table*}
    \centering
    \caption{\textbf{Experiment configurations used in this work.}}
    \vspace{.5em}
    \label{tab:exp_configs}
    \tablestyle{1pt}{1.0}
    \scriptsize
    \begin{tabular}{l |x{80}x{80}|x{130}}
    \toprule
    & \textbf{MNIST} & \textbf{Fashion-MNIST} & \textbf{ImageNet (Ablation)}  \\
    
    \midrule
    \rowcolor[gray]{0.9} \multicolumn{4}{l}
    {\textit{\textbf{Architecture}}} \\
    model & \multicolumn{2}{c|}{DiT-S/2} & DiT-B/2 \\
    input size & \multicolumn{2}{c|}{28$\times$28$\times$1} & {32$\times$32$\times$4}  \\
    patch size & \multicolumn{2}{c|}{2$\times$2} & 2$\times$2 \\
    feature dim & \multicolumn{2}{c|}{384} & 768 \\
    mlp dim & \multicolumn{2}{c|}{1536} & 3072 \\
    depth & \multicolumn{2}{c|}{12} & 12 \\
    register tokens & \multicolumn{2}{c|}{8} & 16 \\
    style embedding tokens & \multicolumn{2}{c|}{32} & 32 \\
    noise input & \multicolumn{2}{c|}{$\mathcal{N}(0,1)$} & $\mathcal{N}(0,1)$ \\
    
    \midrule
    \rowcolor[gray]{0.9} \multicolumn{4}{l}{\textit{\textbf{Optimizer}}} \\
    optimizer & \multicolumn{2}{c|}{AdamW ($\beta_1$ = 0.95, $\beta_2$ = 0.999)} & {AdamW ($\beta_1$ = 0.9, $\beta_2$ = 0.95)} \\
    learning rate & \multicolumn{2}{c|}{2e-4} & 2e-4 \\
    weight decay & \multicolumn{2}{c|}{1e-4} & 1e-4 \\
    cosine annealing & \multicolumn{2}{c|}{yes} & no \\
    no-op steps & \multicolumn{2}{c|}{300} & 0 \\
    ramp-up steps & \multicolumn{2}{c|}{300} & 5000 \\
    gradient clip & \multicolumn{2}{c|}{2.0} & 2.0 \\
    training steps & 12000 & 16000 & 30000 \\
    EMA decay & \multicolumn{2}{c|}{no EMA} & 0.999 \\
    
    \midrule
    \rowcolor[gray]{0.9} \multicolumn{4}{l}{\textit{\textbf{Loss}}} \\
    kernel widths & \multicolumn{2}{c|}{single scale} & single scale \\
    generation space & \multicolumn{2}{c|}{pixel} & SD-VAE~\citep{rombach2022highres, stabilityai/sd-vae-ft-mse}, same as~\citep{deng_generative_2026} \\
    loss space & \multicolumn{2}{c|}{pixel} & SD-VAE \& latent-MAE-256~\citep{deng_generative_2026} \\
    feature normalization & \multicolumn{2}{c|}{top-k EMA per class (k = 5, $\beta$ = 0.99)} & same as~\citep{deng_generative_2026} \\
    field / gradient normalization & \multicolumn{2}{c|}{no} & yes \\
    CFG & \multicolumn{2}{c|}{no} & $\alpha \in \left[1, 4\right],\ p(\alpha) \propto \alpha^{-3}$\\

    \midrule
    \rowcolor[gray]{0.9} \multicolumn{4}{l}{\textit{\textbf{Batch \& Training}}} \\
    class labels $N_\text{c}$ & \multicolumn{2}{c|}{5} & 64 \\
    positive samples $N_\text{pos}$ & \multicolumn{2}{c|}{96} & 64 \\
    generated samples $N_\text{neg}$ & \multicolumn{2}{c|}{96} & 64  \\
    effective batch $B$ ($N_\text{c}{\times}N_\text{neg}$) & \multicolumn{2}{c|}{480} & 4096 \\
    memory bank & \multicolumn{2}{c|}{no, specialized sampler instead} & yes (pos. = 1024 / node, neg. = 8000 / node)\\

    \midrule
    \rowcolor[gray]{0.9} \multicolumn{4}{l}{\textit{\textbf{Infrastructure}}} \\
    codebase & \multicolumn{2}{c|}{own implementation in PyTorch~\citep{paszke_pytorch_2019}} & \citet{deng_generative_2026} implementation in JAX~\citep{jax2018github} \\
    base config & \multicolumn{2}{c|}{-} & \emph{ablation default} \\
    nodes & \multicolumn{2}{c|}{1} & 1
    \\
    GPUs (total) & \multicolumn{2}{c|}{1x A100} & 8x H200 \\
    peak VRAM (per GPU) & \multicolumn{2}{c|}{31GB} & 118GB \\
    training time & \multicolumn{2}{c|}{1h} & 11h \\

    \bottomrule
    \end{tabular}
\end{table*}

\subsection{Class-Conditional Data Loading}
\label{app:data_loading}

Class-conditional loss computation requires a sufficient number of samples per class in every minibatch. With uniform sampling from the unconditional data distribution, however, a fixed batch size yields only a handful of samples per class. To address this, \citet{deng_generative_2026} introduce a \emph{sample queue}: a fixed-size reservoir buffer maintained per class, from which a fixed number of classes and, subsequently, a fixed number of samples per class are drawn each iteration.

This approach carries three notable drawbacks. First, the memory overhead can be substantial. For pixel-space generation on ImageNet, storing 1{,}024 samples per class requires at least 201GB per host. Second, a warmup period is needed to populate the queues before training can proceed effectively. Third, due to the finite buffer size, sampling may not perfectly reflect the true class-conditional data distribution.

For our experiments on MNIST and Fashion-MNIST, we eliminate the sample queue entirely with a lightweight class-conditional batch sampler. At the start of training, we precompute an index list for each class. During training, we uniformly sample a fixed number of classes and draw sample indices directly from the corresponding lists. This resolves all three issues at no memory cost, with no warmup requirement, and with exact uniformity over the class-conditional distribution. In PyTorch~\citep{paszke_pytorch_2019}, the approach reduces to a simple custom \texttt{BatchSampler} implementation.

\subsection{Feature Normalization}
\label{app:feat_norm}

Feature vectors can differ vastly in magnitude and pairwise distance across datasets and latent spaces, making it difficult to select a sensible kernel bandwidth in a general manner. Rather than tuning the kernel width per dataset or space, one can normalize the feature vectors themselves, leaving the loss formulation and all other hyperparameters unchanged. By absolute homogeneity of norms,
\begin{equation}
    \|\alpha\,\mathbf{x} - \alpha\,\mathbf{y}\| = \alpha\,\|\mathbf{x} - \mathbf{y}\|, \quad \alpha \in \mathbb{R}^+,
\end{equation}
so normalization factor $\alpha$ directly controls the effective kernel bandwidth for radial kernels.

\citet{deng_generative_2026} normalize each batch by the average pairwise distance over all samples, both real and generated. This makes the effective bandwidth dependent on the current pushforward distribution $q_\theta$ and introduces high variance from batch-level estimation.

For our MNIST and Fashion-MNIST experiments, we adopt a more stable alternative inspired by \citet{kynkaanniemi2019improved}. We estimate local manifold structure via the mean $k$-nearest-neighbor distance within each batch, tracked as an exponential moving average (EMA) per class. Let $\mathcal{N}(\mathbf{x})$ denote the $k$ nearest neighbors of $\mathbf{x}$ among $N$ real samples from class $c$. The EMA update with momentum $\beta \in \left[0, 1\right]$ is
\begin{equation}
    d_t = \beta\, d_{t-1} + (1-\beta)\frac{1}{Nk} \sum_{i=1}^{N} \sum_{\mathbf{x}' \in \mathcal{N}(\mathbf{x}_i)} \| \mathbf{x}_i - \mathbf{x}'\|,
\end{equation}
and both real and generated features are normalized by $d_t$ before loss computation, $\hat{\mathbf{x}} := \mathbf{x} / d_t$. This makes kernel widths interpretable in units of the mean nearest-neighbor distance. In practice, per-class EMAs are maintained by instantiating one state-full \emph{Loss} object per class. We initialize $d_0 = 1$ and skip the first 300 optimization steps to ensure $d_t$ is sufficiently stable. In the future, we plan to improve this by introducing a bias-correction term similar to \citep{kingma2017adam}.

\subsection{Classifier-Free Guidance}
\label{app:cfg}

Inspired by classifier-free guidance~(CFG)~\citep{ho2022classifierfree},
\citet{deng_generative_2026} augment the drifting objective with negative
samples drawn from the unconditional data distribution
$p_\text{data}(\cdot | \varnothing)$.
Concretely, negative samples are drawn from the mixture
\begin{equation}
    \tilde{q}_\theta(\cdot | c)
    \;=\;
    (1-\gamma)\,q_\theta(\cdot | c)
    \;+\;
    \gamma\, p_\text{data}(\cdot | \varnothing),
    \qquad \gamma \in [0,1],
\end{equation}
where $q_\theta(\cdot | c)$ is the class-conditional pushforward distribution.
Introducing the reparametrisation $\alpha = (1-\gamma)^{-1} \geq 1$, the
target distribution against which the generator is trained becomes
\begin{equation}
    \tilde{p}(\cdot | c)
    \;=\;
    \alpha\, p_\text{data}(\cdot | c)
    \;-\;
    (\alpha-1)\, p_\text{data}(\cdot | \varnothing),
\end{equation}
an analogue of the guidance-scaled distribution familiar from diffusion
models~\citep{ho2022classifierfree}.  In practice, $\alpha$ is sampled
uniformly from $[1, 3]$ and supplied as an additional input to the network.

To implement this efficiently, a fixed number $N_\text{unc}$ of unconditional
samples $\y^\text{unc}_j \sim p_\text{data}(\cdot | \varnothing)$ are appended
to each mini-batch and assigned scalar weights
\begin{equation}
    w \;=\; \frac{N_\text{neg} - 1}{N_\text{unc}}\,(\alpha - 1),
\end{equation}
where $N_\text{neg}$ is the number of generated (conditional) samples.  The
weighting ensures that the empirical mixture matches $\tilde{q}_\theta$ in
expectation for any sampled $\alpha$.

We adopt this scheme within our framework by replacing the expectation over
$q_\theta(\cdot | c)$ in the \logkde{} objective with an expectation over
$\tilde{q}_\theta(\cdot | c)$, yielding the CFG \logkde{} objective
\begin{equation}
\label{eq:cfg-loss}
    \mathcal{L}^{\mathrm{CFG}}_{p,q}(\theta)
    \;:=\;
    \mathbb{E}_{\mathbf{x} \sim q_\theta}
    \!\left[
        \log\!\left(
            \frac{\tilde{q}_{\mathrm{KDE}}[k^\#](\mathbf{x})}
                 {p_{\mathrm{KDE}}[k^\#](\mathbf{x})}
        \right)
    \right].
\end{equation}
The numerator is computed via a weighted \texttt{logsumexp} as
\begin{equation}
    \log \tilde{q}_{\mathrm{KDE}}[k^\#](\mathbf{x})
    \;=\;
    \log\!\left(
        \frac{1}{N_\text{neg} + N_\text{unc}\,w}
        \left[
            \sum_{i=1}^{N_\text{neg}} k^\#(\mathbf{x},\, \mathbf{y}^-_i)
            \;+\;
            w \sum_{j=1}^{N_\text{unc}} k^\#(\mathbf{x},\, \mathbf{y}^\text{unc}_j)
        \right]
    \right),
\end{equation}
where $\mathbf{y}^-_i \sim q_\theta(\cdot | c)$ are conditional negative
samples and $\mathbf{y}^\text{unc}_j \sim p_\text{data}(\cdot | \varnothing)$
are unconditional samples.

\clearpage
\section{Additional Results}
\label{app:additional_results}

We provide qualitative results for image generation in this section. \Cref{fig:samples_imagenet_abl_laplsharp} shows uncurated samples from our ImageNet experiment using \logkde{}-Laplace$^\#$ as the training objective. Note that these results are from ablation runs after only 30{,}000 training steps and are not directly comparable to the state-of-the-art runs from \citet{deng_generative_2026} that use bigger models and train for 200{,}000 steps.  \Cref{fig:samples_mnist} compares generations from different drift-based approaches on MNIST and Fashion-MNIST.

\begin{figure}[h]
    \centering
    \includegraphics[width=0.9\textwidth]{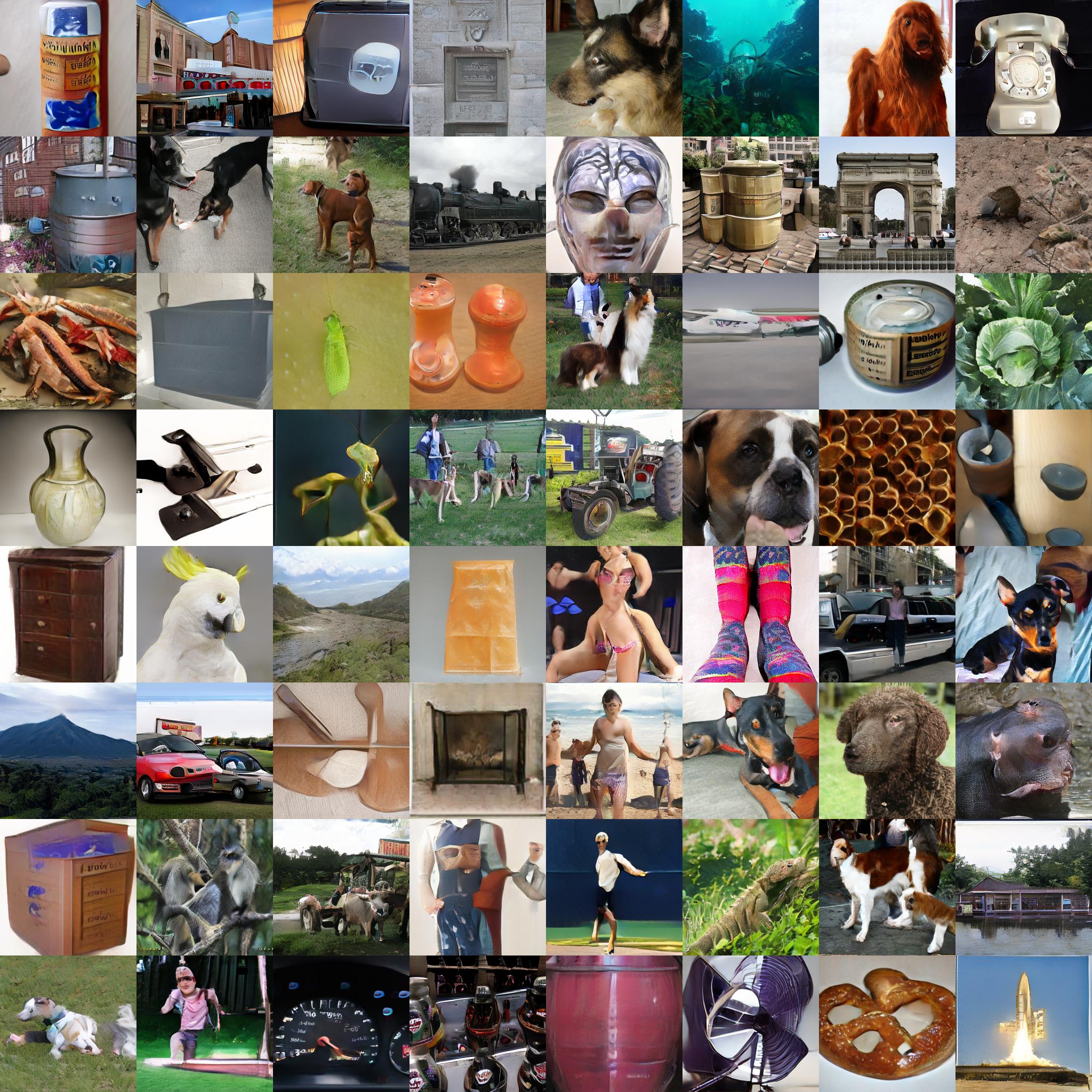}
    \caption{\textbf{Uncurated images generated on ImageNet.} Results after training for 30{,}000 steps with the \laplsharp{} objective with a kernel width of $0.3$ (FID-50k $= 9.35$).}
    \label{fig:samples_imagenet_abl_laplsharp}
\end{figure}

\begin{figure}[h]
    \centering
    \includegraphics[width=\textwidth]{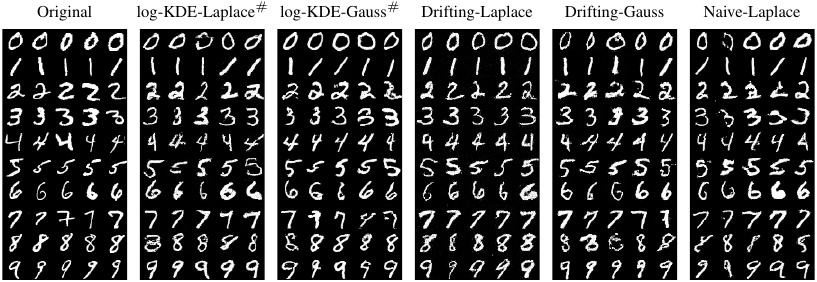}\\\vspace{1cm}
    \includegraphics[width=\textwidth]{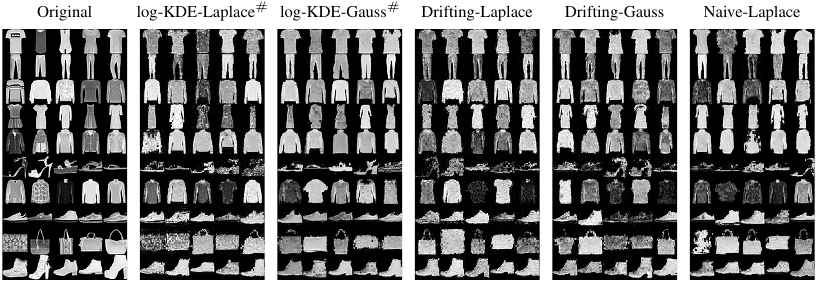}
    \caption{\textbf{Uncurated generated images on MNIST and Fashion-MNIST.} Real and generated samples for MNIST (top) and Fashion-MNIST (bottom) across different drift variants and \logkde{}. Images were generated using the kernel width yielding the lowest FID score for each variant (see \cref{fig:fid,tab:quant_results}).}
    \label{fig:samples_mnist}
\end{figure}

\end{document}